\journal{Artificial Intelligence}
\newtheorem{theorem}{Theorem}
\newtheorem{lemma}[theorem]{Lemma}
\newtheorem{definition}{Definition}
\newcommand{\calA}{\mathcal{A}}
\newcommand{\calS}{\mathcal{S}}
\newcommand{\calG}{\mathcal{G}}
\newcommand{\calC}{\mathcal{C}}
\newcommand{\bbR}{\mathbb{R}}
\newcommand{\bbE}{\mathbb{E}}
\newcommand{\Renyi}{R\'enyi }
\newcommand{\dx}{{\text{d}x}}
\newcommand{\dy}{{\text{d}y}}
\newcommand{\bTk}{{$\beta$-Top-$k$ }}
\newcommand{\btk}{{$\beta$-top-$k$ }}
\newcommand{\bg}{\boldsymbol g}
\newcommand{\bh}{\boldsymbol h}
\newcommand{\bv}{\boldsymbol v}
\newcommand{\bz}{\boldsymbol z}
\newcommand{\bx}{\boldsymbol x}
\newcommand{\ourbm}{\boldsymbol m}
\newcommand{\bzero}{\boldsymbol 0}
\newcommand{\btx}{\boldsymbol{\tilde x}}
\newcommand{\btg}{\boldsymbol{\grave g}}
\newcommand{\tp}{{\tilde p}}
\newcommand{\vD}{{R}}
\newcommand{\tm}{{\tilde m}}
\newcommand{\btm}{\boldsymbol{\tilde m}}
\newcommand{\bdelta}{\boldsymbol{\delta}}
\newcommand{\ourmethod}{R\'enyi-Robust-Smooth}
\newcommand{\argmax}{\operatorname*{argmax}}
\newcommand\ao[1]{{\color{blue} \footnote
{\color{blue}Ao: #1}} }
\author{\name Ao Liu \email liua6@rpi.edu \\
       \addr Department of Computer Science Rensselaer Polytechnic Institute\\
       Troy, NY 12180-3590, USA
       \AND
       \name Xiaoyu Chen \email xiaoyu-c17@mails.tsinghua.edu.cn \\
       \addr Institute for Interdisciplinary Information Sciences, Tsinghua University\\
       Beijing, China 100084
       \AND
       \name Sijia Liu \email liusiji5@msu.edu \\
       \addr  Department of Computer Science and Engineering, Michigan State University\\
       220 Trowbridge Rd, East Lansing, MI 48824, USA
       \AND
       \name Lirong Xia \email xial@cs.rpi.edu \\
       \addr Department of Computer Science, Rensselaer Polytechnic Institute\\
       Troy, NY 12180-3590, USA
       \AND
       \name Chuang Gan \email ganchuang1990@gmail.com \\
       \addr MIT-IBM Watson AI Lab \\
       75 Binney St, Cambridge, MA 02142, USA}
\begin{document}
%\editor{Kevin Murphy and Bernhard Sch{\"o}lkopf}
%\maketitle

\begin{frontmatter}
\title{Certifiably Robust Interpretation via \Renyi Differential Privacy}
\author[mymainaddress]{Ao Liu}\corref{mycorrespondingauthor}\cortext[mycorrespondingauthor]{Corresponding author}\ead{liua6@rpi.edu}
\author[mysecondaryaddress]{Xiaoyu Chen}
\author[mythirdaddress]{Sijia Liu}
\author[mymainaddress]{Lirong Xia}
\author[myforthaddress]{Chuang Gan}
\address[mymainaddress]{Department of Computer Science, Rensselaer Polytechnic Institute, Troy, NY, USA}
\address[mysecondaryaddress]{Institute for Interdisciplinary Information Sciences, Tsinghua University, Beijing, China}
\address[mythirdaddress]{Department of Computer Science and Engineering, Michigan State University, 220 Trowbridge Rd, East Lansing, MI, USA}
\address[myforthaddress]{MIT-IBM Watson AI Lab, 75 Binney St, Cambridge, MA, USA}
\begin{abstract}
Motivated by the recent discovery that the interpretation maps of CNNs could easily be manipulated by adversarial attacks against network interpretability, we study the problem of interpretation robustness from a new perspective of \Renyi differential privacy (RDP). The advantages of our \ourmethod{} 
(RDP-based interpretation method) are three-folds. First, it can offer provable and certifiable top-$k$ robustness. That is, the top-$k$ important attributions of the interpretation map 
are provably robust under any input perturbation with bounded $\ell_d$-norm (for any $d\geq 1$, including $d = \infty$).  Second, our proposed method offers $\sim$10\% better experimental robustness than existing approaches in terms of the top-$k$ attributions. Remarkably, the accuracy of \ourmethod{} also outperforms existing approaches. Third, our method can provide a smooth tradeoff between robustness and computational efficiency. Experimentally, its top-$k$ attributions are {\em twice} more robust than existing approaches when the computational resources are highly constrained.
\end{abstract}
\begin{keyword}
  Differential Privacy, Machine Learning, Robustness, Interpretation, and Neural Networks
\end{keyword}
\end{frontmatter}
%\linenumbers
\section{Introduction}\label{sec:intro}

Convolutional neural networks (CNNs) have demonstrated successes on various computer vision applications, such as image classification~\cite{krizhevsky2012imagenet,he2016deep},  %\lirong{I changed the citation format to be numbered to safe some space.} 
object detection~\cite{ren2015faster}, semantic segmentation~\cite{gidaris2015object} and video recognition~\cite{gan2015devnet}. 
%  With the structure of networks becoming more and more complex, it becomes harder for human beings to understand the roles of learned features inside the deep learning algorithms. Thus, 
Spurred by the promising applications of CNNs, understanding why they make the correct prediction has become essential. The interpretation map explains which part of the input image plays a more important role in the prediction of CNNs. %\lirong{This sentence is disconnected from the rest of the paragraph---I think interpretability is important by itself and it may not be well-aligned with performance.}
%  For example, we may consider a CNN predicting whether one person got cancer or not according to the patient's {\em computed tomography} (CT) image. A doctor or a biologist may also care about which organ got cancer. %This understanding of CNNs can help doctors to learn which organ is at the highest risk. 
% Feature importance map can help to solve the above problem by providing the importance of different features in the prediction process in CNNs. In the example of cancer prediction, the regions play a more important role in the prediction probably located at the organ got cancer. %Thus, having an interpretation map can make the AI algorithm more transparent and trustworthy.
However, it has recently been  shown that many    interpretation maps, e.g., Simple Gradient~\cite{simonyan2013deep}, Integrated Gradient~\cite{sundararajan2017axiomatic}, DeepLIFT~\cite{shrikumar2017learning} and GradCam~\cite{selvaraju2017grad} are vulnerable to imperceptible input perturbations 
\cite{ghorbani2019interpretation,heo2019fooling}. In other words, slight perturbations to an input image could cause a significant discrepancy in its coupled interpretation map while keeping the predicted label unchanged (see an illustrative example in  Figure~\ref{fig:Our_Example}). %\lirong{didn't understand the but part}

\begin{figure*}[ht]
\centering
\includegraphics[width = 0.99\textwidth]{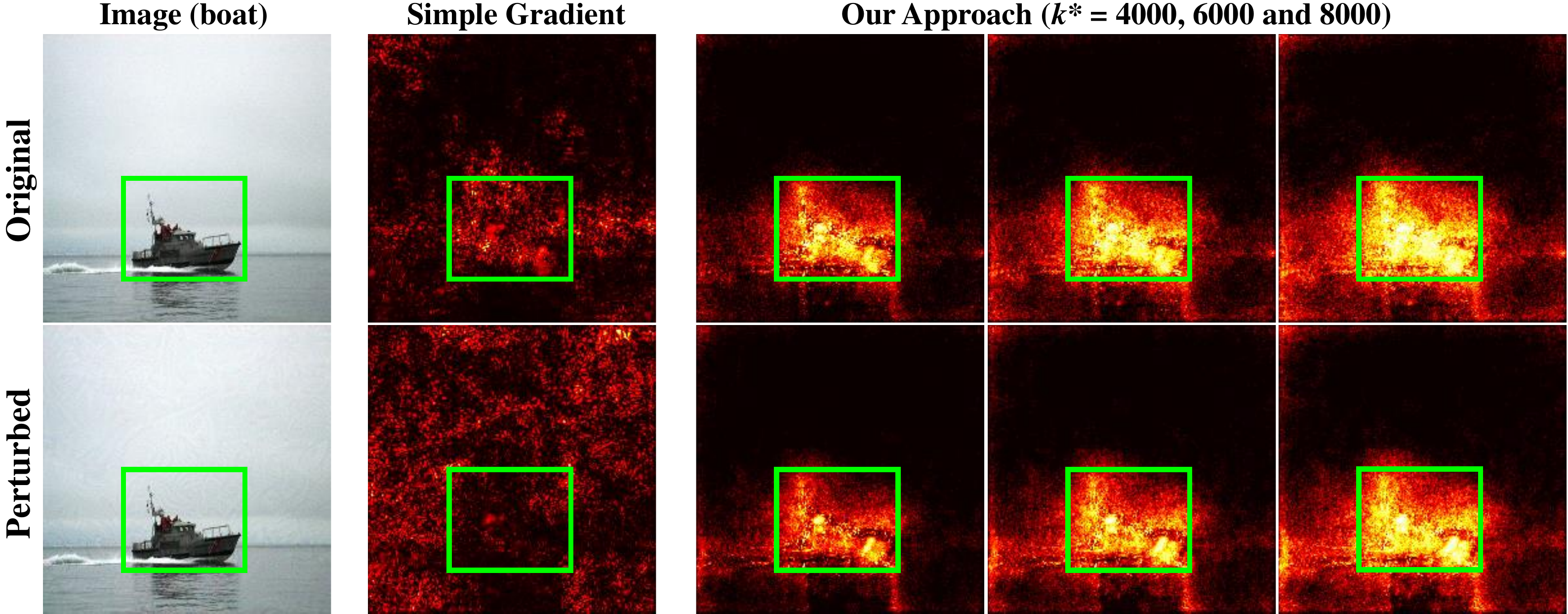}
\caption{An example of the vulnerability of {\em simple gradient}. The green boxes annotate the position of the labeled object. Our approach offers stronger robustness against interpretation attacks. Remarkably, our interpretation also more accurately matches the position of the object.%also better illustrate {\color{red}{Lirong: better  explains? What does ``illustration'' mean and why it is important?}} the prediction of CNNs (more accurate).
}
\vspace{-0.8em}
\label{fig:Our_Example}
\end{figure*}

%\SL
{Robustness of the interpretation maps is essential.  Successful attacks can create confusions between the model interpreter and the classifier, which would further ruin the trustworthiness of systems that use the interpretations in down-stream actions, e.g., medical recommendation \cite{quellec2017deep},  
  source code captioning     \cite{ramakrishnan2020semantic},
and   transfer learning   \cite{Shafahi2020Adversarially}.}
In this paper, we aim to provide a framework to generate certifiably robust interpretation maps against $\ell_d$-norm attacks, where $d \in [1,\infty]$\footnote{In all discussions of this paper, we use $[1,\infty]$ to represent $[1,\infty)\cup\{\infty\}$.}. Our framework does not require the defender to know the type of attack, as long as it is an $\ell_d$-norm attack. Our framework provides a stronger robustness guarantee if the exact attack type is known.%\lirong{unfinished sentence.} %\lirong{are these common types of attacks? If so say it} 

\begin{figure}[ht]
\centering%\vspace{-1em}
\includegraphics[width = 0.6\textwidth]{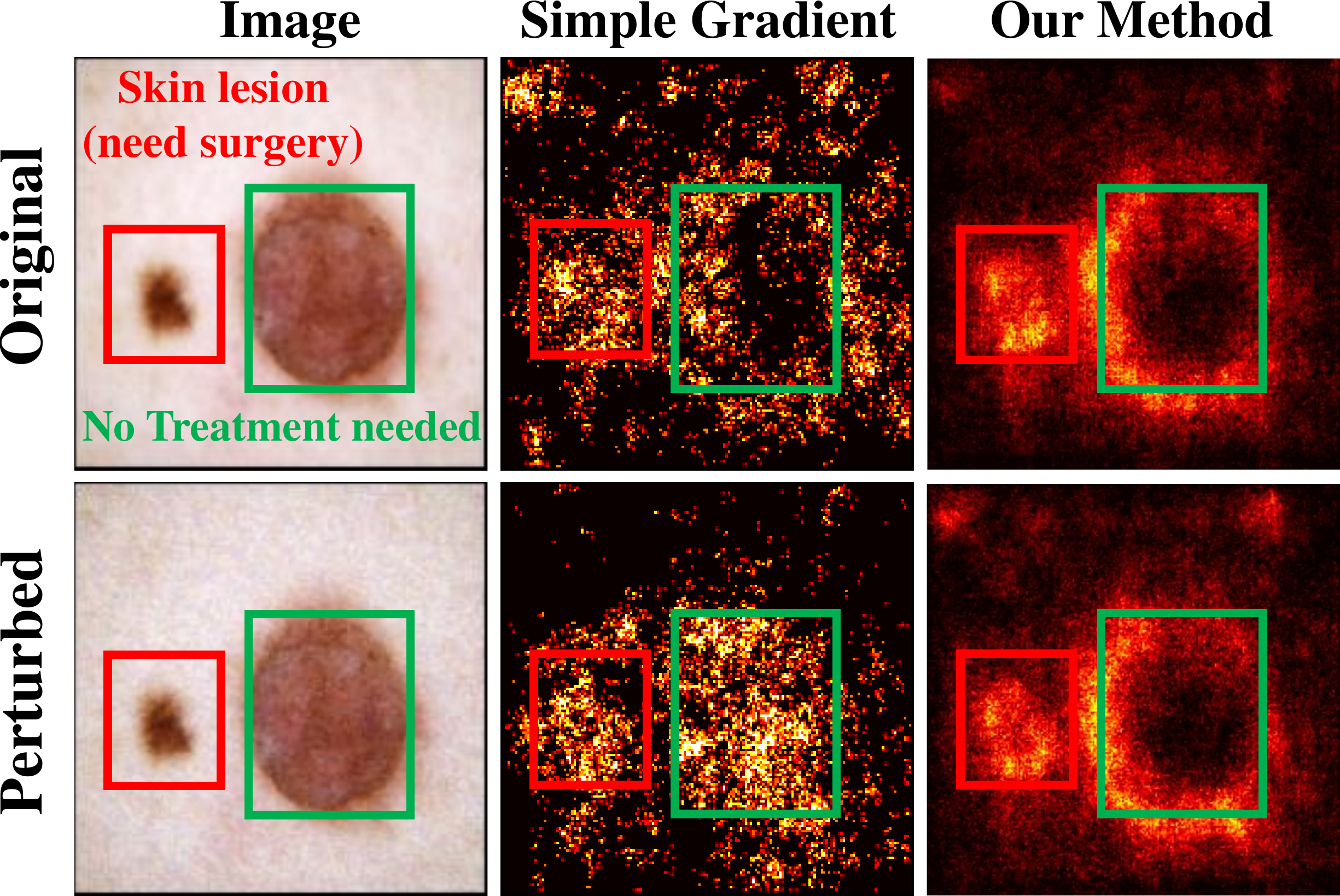}
\vspace{-0.8em}
\caption{A motivating example of interpretation robustness}
\label{fig:motivation}
\end{figure}

\noindent{\bf A Motivating Example. } To better motivate and strengthen our claim,  we present an experiment on real-world medical images: {\textit{skin lesion diagnosis}} (see Figure \ref{fig:motivation}, \cite{tschandl2018ham10000}).  In this task, the ML classifier predicts a type of skin lesion, and an interpretation map can help  doctors localize the most interpretable region responsible for the current prediction (e.g., the region highlighted by the red box  in Figure \ref{fig:motivation}). However, an adversary (that crafts imperceptible input perturbations) could fool interpretation map (for incorrect identification of region marked by the green box in Figure \ref{fig:motivation})  under the same classification label. If doctors perform a medical diagnosis based on the fooled interpretation map, then the doctor will waste his/her time on treating the incorrect region. {Figure~\ref{fig:motivation}} shows that our proposed  robust interpretation can effectively defend such an adversary.

{\em Differential privacy} (DP) has recently been introduced as a tool to improve the robustness of machine learning algorithms.   DP-based robust machine learning algorithms can provide theoretical robustness guarantees against adversarial attacks ~\cite{lecuyer2019certified}. {\em \Renyi differential privacy} (RDP) is a generalization of the standard notion of differential privacy. %RDP uses \Renyi divergence to measure the difference between probability distributions. 
It has been proved that analyzing the robustness of CNNs using RDP can provide a stronger theoretical guarantee than standard DP~\cite{feldman2018privacy}. However, the following question remains open.

\vspace{0.5em}\textbf {How can we protect the interpretation map from $\ell_d$ attacks for $d\ge 3\;$?}

\vspace{0.5em} This is the question we will address in this paper.
%In this paper, we address this question by first revealing a novel relationship between the robustness of interpretation and RDP. 
%and use RDP as a tool to analyze the robustness of interpretation maps. In our setting,  the adversary can cast an $\ell_d$-norm attack for any $d\in[1,\infty]$.% which is more general than the attacks in~\citet{li2019certified}.\lirong{can we say more general than previous work?} 

\noindent{\bf Our main theoretical contribution} is the \ourmethod{} framework (Algorithm~\ref{alg:our}) that provides a smooth tradeoff between interpretation robustness and computational-efficiency. Here, the robustness of interpretation is measured by the resistance to the top-$k$ attribution\footnote{Top-$k$ attribution refers to the top-$k$ important pixels (top-$k$ largest components) in the interpretation maps.} change of an interpretation map when facing interpretation attacks. To prove the robustness of our framework, we firstly propose the new notion of \Renyi robustness (Definition~\ref{def:renyi_robust}), which is directly connected with RDP and other robustness notions. Then, we show that the interpretation robustness can be guaranteed by \Renyi robustness (Section~\ref{sec:main}). Our RDP-based analysis can provide significantly tighter robustness bound in comparison with the recently proposed interpretation method,  Sparsified SmoothGrad~\cite{levine2019certifiably} (Figure~\ref{fig:exp} left).

\iffalse
\ao{I'll modify this part by 2pm} is a novel connection between interpretation robustness and RDP. Here the robustness of interpretation is measured by the resistance to the top-$k$ attribution\footnote{Top-$k$ attribution refers to the top-$k$ important pixels (top-$k$ largest components) in the interpretation maps.} change of an interpretation map when facing adversarial attacks. Leveraged by this connection, we proposed an \ourmethod{} with certified %\lirong{what does ``certified'' mean? How is it different from ``provable''?}
robustness against $\ell_d$-norm attacks for any $d\in[1,\infty]$. We show that our RDP-based analysis can provide significantly tighter robustness bound in comparison with the recently proposed interpretation method,  Sparsified SmoothGrad~\cite{levine2019certifiably}.
\fi

%}

%Our first contribution is a connection between RDP and interpretation robustness. In most applications of interpretation maps, the top-$k$ important features are usually of more interest. Thus, we use the top-$k$ overlap of the interpretation maps as the measure of interpretation robustness. We further mathematically show that interpretation robustness can be guaranteed by the RDP property of randomized interpretation algorithms. \lirong{give references to the theorems}

\noindent{\bf Experimentally,} our \ourmethod{}  %\lirong{framework? Again, good to give it a name and use consistently throughout the paper.}
can provide $\sim$10\% improvement on  %\lirong{confused---how can theoretical robustness be verified by experiments?}
robustness in comparison with the state of the art (Figure~\ref{fig:exp} left). Surprisingly, our improvement of interpretation robustness does not come at the cost of accuracy. We found that the accuracy of \ourmethod{}  %\lirong{give it a name} 
out-performs the recently-proposed approach of  Sparsified SmoothGrad (Figure~\ref{fig:exp} right). %\lirong{sounds weak. Baseline can be pretty bad, and every paper should beat the baseline. Do you mean that robustness does not come at the cost of accuracy?}
%Here, we introduce top-$k$ pointing game to benchmark the interpretation accuracy. %\lirong{We don't have to very precise at this point. How about replacing this sentence by ``w.r.t.~robustness and accuracy''?} %, which is defined as the frequency of top-$k$ pixels of an interpretation map to hit the pixel-level labeled objects.\lirong{I am not sure we need to define this at this point.} 
Our method works surprisingly well when computational resources are highly constrained. %\lirong{this in inaccurate---when talking about computationally efficient, usually we need to provide runtime compared to a baseline that achieves similar goal. How about just ``Our method works surprisingly well when computational resources are highly constrained.''}
The top-$k$ attributions of our approach are twice more robust than the existing approaches when the computational resources are highly constrained (Table~\ref{tab:robust_concentration}).

\vspace{0.5em}\noindent{\bf Related Works. } Many algorithms have been proposed recently to interpret the predictions of CNNs (e.g., Simple Gradient~\cite{simonyan2013deep}, Integrated Gradient~\cite{sundararajan2017axiomatic}, DeepLIFT~\cite{shrikumar2017learning}, CAM~\cite{zhou2016learning} and GradCam~\cite{selvaraju2017grad}). The idea of adding random noise is a popular tool to improve the performance of interpretation maps. Following this idea,  SmoothGrad~\cite{smilkov2017smoothgrad}, Smooth Grad-CAM++~\cite{omeiza2019smooth}, among others, have been proposed to improve the performance of Simple Gradient, Grad-CAM, and other interpretation methods. However, none of the above-mentioned works can provide certified robustness to interpretation maps. %\lirong{seems contradicts with L62. What was the performance there if not robustness?} Within our scope,\lirong{not clear what is this scope. Better to be more explicit about what is our setting, why it is better than previous settings.} 
We are only aware of one paper studying this problem \cite{levine2019certifiably}, whether the authors proposed an interpretation algorithm to provide certifiably robustness against only $\ell_2$-norm attack.%\lirong{previously $\ell_1$ was also mentioned.} %when provided unlimited computing resources.

Adversarial training is one popular tool to improve the robustness of CNNs against adversarial attacks~\cite{sinha2017certifying,ross2018improving,tong2021facesec,li2021defending}. The method of adversarial training has also been applied to the interpretation algorithm recently~\cite{singh2019benefits}. However, there is no theoretical guarantee to the robustness of adversarial training against any norm-based attacks. Theoretical guarantees usually is also missing in other popular tools~\cite{abolghasemi2019pay,li2020robust} to improve the robustness of CNNs against adversarial attacks. %adversarial training may\lirong{too weak, can we say that there is no theoretical guarantee instead?} not guarantee the robustness against any form of attacks.
%\citet{zhang2019limitations} showed that the effectiveness of adversarial training is highly correlated with the
%characteristics of the dataset. \lirong{how this result is relevant to our paper?}

To the best of our knowledge, we are the first to apply RDP to provide certified robustness of interpretation maps. %\lirong{Just to make sure, \cite{levine2019certifiably} didn't use RDP, am I correct?} %\lirong{Can we say ``To the best of our knowledge, we are the first to apply RDP to provide guaranteed robustness of interpretation maps''? If so, might be good to say it earlier.}
The connection between DP and robustness was first revealed by~\citeauthor{dwork2009differential}~\citep{dwork2009differential}. \citeauthor{lecuyer2019certified}~\citep{lecuyer2019certified} were the first to introduce a DP-based algorithm to improve the prediction robustness of CNNs using the DP-robustness conclusion. %\lirong{is ``DP-robustness conclusion'' common knowledge?}.
RDP~\cite{mironov2017renyi} is a popular generalization of the standard notion of DP. RDP often can provide tighter privacy bound or robustness bound than standard DP in many applications~\cite{feldman2018privacy}.  The tool of RDP is also used to improve the robustness of classifiers\cite{li2019certified}. However, \citeauthor{li2019certified}~\citep{li2019certified}'s approach can only defend $\ell_1$-norm or $\ell_2$-norm attacks for classifications tasks, and usually cannot effectively defense $\ell_d$-norm attacks for $d\geq3$.

\section{Preliminaries}\label{sec:prelim}
%Let $\bx \in \bbR^n$ to denote the input of CNN and let $\bg(\cdot): \bbR^n \to \bbR^n$ to denote the interpretation algorithm upon the CNN. 
%To simplify notation, we also 
%\subsection{Interpretation Map and Its Robustness}
%\lirong{no need to have subsections in prelim.}
{\bf Interpretation Attacks.} In this paper, we focus
%\lirong{remove ``mainly'' unless there is another setting discussed in the paper}
on the interpretations of CNNs for image classification, where the input is an image $\bx$ and the output is a label in $\calC$. An interpretation of this CNN explains why the CNN makes this decision by showing the importance of the features in the classification. %Let $\calC$  denote the set of all possible labels. 
%Based on this setting of interpretation, we define an 
An {\em interpretation algorithm} $\bg:\,\bbR^n\times\calC \to \bbR^n$  maps the (image, label) pairs %\lirong{$R^n\times\calC$ is more nature---image first then its label} %$n$-dimensional input and a label 
to an $n$-dimensional interpretation map%\lirong{is there a name of the output such as ``interpretation map"?}
\footnote{In this paper, we treat both the input $\bx$ and interpretation map $\ourbm$ as vectors of length $n$. The dimension of the output can be different from that of the input. We use $\bbR^n$ to simplify the notation.}. The output of $\bg$
consists of pixel-level attribution scores, which reflect the impact of each pixel on making the prediction. Because interpretation attacks will not change in the prediction of CNNs, we sometimes omit the label part of input when the context is clear.  
In an interpretation attack, the adversary replaces $\bx$ with its perturbed version $\btx$ while keeping the predicted label unchanged. %\lirong{not sure what does ``consistent'' mean---can labels be different but consistent?}. 
We assume that the perturbation is constrained by $\ell_d$-norm $||\bx-\btx||_d \triangleq \left(\sum_{i=1}^n |\bx_i-\btx_i|^d\right)^{1/d} \leq L$. When   $d = \infty$, the constraint  becomes $\max_i |\bx_i-\btx_i| \leq L$. 
% Thus, $\ell_{\infty}$-norm attacks usually are very hard to be detected by the users. $\ell_1$-norm attack requires $\sum_i |\bx_i-\btx_i| \leq L$, which is another very natural setting that the total budget of change is constraint by $L$. Accordingly, 
%$\ell_1$ and $\ell_{\infty}$-norm attacks are both of much interests in many real-world applications. .
%\textcolor{red}{[Are you focusing on $\ell_1$ and $\ell_{\infty}$-norm attacks ? If yes, just say that In this paper, we focus on $\ell_1$ and $\ell_{\infty}$-norm based attacks.]}

\noindent{\bf Measure of Interpretation Robustness.} Interpretation maps are usually applied to identify the important features of CNNs in many tasks like object detection~\cite{chu2018deep}. Those tasks usually care more about the top-$k$ pixels of the interpretation maps. Accordingly, we adopt %\lirong{Is it new in this paper? Would be good if it is similar to a previous measure.} 
the same robustness measure as~\cite{levine2019certifiably}. Here, the robustness is measured by the overlapping ratio between the top-$k$ components of $\bg(\bx,C)$ and the top-$k$ components of $\bg(\btx,C)$. Here, we use $V_k(\bg(\bx,C),\,\bg(\btx,C))$ to denote this ratio. For example, we have $V_2((1,2,3),\,(2,1,2)) = 0.5$, because the $2^{\text{nd}}$ and $3^{\text{rd}}$ components are the top-2 components of $(1,2,3)$ while the $1^{\text{st}}$ and $3^{\text{rd}}$ components are the top-2 components of $(1,2,3)$.\footnote{We do not take the relative order between top-$k$ components into the account of this paper.} %\lirong{good to give slightly more details how it is defined.}
See Appendix~\ref{sec:overlap_def} for the formal definition of $V_k$. We define interpretation robustness as follows. 

\begin{definition}\emph{\textbf{(\bTk Robustness or Interpretation Robustness)}}\label{def:robust} For a given input $\bx$ with label $C$, we say an interpretation method $\bg(\cdot,C)$ is $\beta$-Top-$k$ robust to $\ell_d$-norm attack of size $L$ if for any $\btx$ {\em s.t.} $||\bx-\btx||_d\leq L$,
%\vspace{-0.7em}
\begin{equation}\nonumber
    V_k\big(\bg(\bx,C),\bg(\btx,C)\big)\geq \beta.
\end{equation}
\end{definition}
\vspace{0.1em}
%\subsection{\Renyi Differential Privacy}\label{sec:RDP}
%\lirong{Can make the title a bf to save some space}

\noindent{\bf \Renyi Differential Privacy.} RDP is a novel %\lirong{is it true? I somehow had an impression that it is new and the main usage is robustness.}
generalization of standard differential privacy. RDP uses \Renyi divergence as the measure of difference between distributions. Formally,
for two distributions $P$ and $Q$ with the same support $\calS$, the \Renyi divergence of order $\alpha > 1$ is defined as:
%\vspace{-0.7em}
\begin{equation}\nonumber
\begin{split}
D_{\alpha}\big(P||Q\big) &\triangleq \frac{1}{\alpha-1}\ln \bbE_{x\sim Q} \left(\frac{P(x)}{Q(x)}\right)^{\alpha}
\;\;\;\;\text{and}\;\;\;\;\;D_{\infty} = \sup_{x\in\calS} \ln\frac{P(x)}{Q(x)}.
\end{split}
\end{equation}
%\vspace{-0.4em}
%Then, we present the formal definition of RDP. Here, we slightly 
In this paper, we adopt a  generalized  RDP by considering ``adjacent'' inputs whose $\ell_d$ distance is no larger than $L$.\footnote{Standard RDP assumes that the $\ell_0$-norm of adjacent inputs is no more than $1$.} 

\begin{definition}[\Renyi Differential Privacy]
A randomized function $\btg$ is $(\alpha,\epsilon,L)$-\Renyi differentially private to $\ell_d$ distance, %\lirong{`` to $\ell_d$ distance $L$'' reads weird}
if for any pair of inputs $\bx$ and $\btx$ {\em s.t.} $||\bx-\btx||_d \leq L$, %\vspace{-1em}
\begin{equation}\nonumber
    D_{\alpha}\big(\btg(\bx)\,||\,\btg(\btx)\big) \leq \epsilon.
\end{equation}
%\lirong{did you consider the label in the input of $\bg$?}
%where $D_{\alpha}(\cdot\,||\,\cdot)$ is the \Renyi divergence {\color{red}(See Definition XXX)} between two random variables.
\end{definition}
Same as in standard DP, smaller values of  $\epsilon$ corresponds to a more private  $\btg(\cdot)$. 
%According to the $\alpha \to \infty$ case of \Renyi divergence discussed above, we know 
RDP generalizes the standard DP, which is RDP with $\alpha = \infty$. Recall that in this paper  $\bx$ represents the input image and $\btg$ represents a (randomized) interpretation algorithm. To clarify our notations, we add~\,${\boldsymbol {\grave{}}}$~\,sign only on the top of all randomized function.

\noindent{\bf Intuitions of the RDP-Robustness Connection.} Assume the randomized function $\btg(\cdot)$ 
is \Renyi differentially private. According to the definition of RDP, for any input $\bx$, $\btg(\bx)$ (which is a distribution) is insensitive to small perturbations on $\bx$. Consider a deterministic algorithm $\bh(\bx) \triangleq \bbE_{\btg}\left[\btg(\bx)\right]$ that outputs the expectation of $\btg(\bx)$. Intuitive, $\bh(\bx)$ is also insensitive to small perturbations on $\bx$. In other words, the RDP property of $\btg(\cdot)$ leads to the robustness of $\bh(\cdot)$.

\noindent{\bf \Renyi Robustness.} Inspired by the merits of RDP, we introduce a new robustness notion: \Renyi robustness, whose merits are two-folds:\\
$(i)$ The $(\alpha,\epsilon,L)$-RDP property of $\btg(\cdot)$ directly leads to the $(\alpha,\epsilon,L)$-\Renyi robustness on $\bbE\left[\btg(\cdot)\right]$. Thus, similar with RDP, \Renyi robustness also has many desirable properties.\\
$(ii)$ \Renyi robustness is closely related to other robustness notions. If setting $\alpha = \infty$, the robustness parameter $\epsilon$ measures the average relative change of output when the input is perturbed. In Theorem~\ref{theo:renyi_btk} of Section~\ref{sec:main}, we will also prove that \Renyi robustness is closely related to \btk robustness. 

\begin{definition}[\Renyi Robustness]\label{def:renyi_robust} For a given input $\bx\in\calA$, we say a deterministic algorithm $\bh(\cdot):\,\calA\to[0,1]^{n}$ is $(\alpha,\epsilon,L)$-\Renyi robust to $\ell_d$-norm attack if for any $\bx$ and $\btx$ {\em s.t.} $||\bx-\btx||_d\leq L$,% we always have:
%\vspace{-1em}
\begin{equation}\nonumber
\begin{split}
\vD_{\alpha}(\bh(\bx)\,||\;\bh(\btx))  \triangleq\;& \frac{1}{\alpha-1}\ln\left( \prod_{i=1}^n \bh(\btx)_i\cdot\left(\frac{\bh(\bx)_i}{\bh(\btx)_i}\right)^{\alpha}\right)\\
=\;& \frac{1}{\alpha-1}\ln\left( \prod_{i=1}^n \frac{\big(\bh(\bx)_i\big)^{\alpha}}{\big(\bh(\btx)_i\big)^{\alpha-1}}\right)\\
\leq&\; \epsilon, 
\end{split}
\end{equation}
%where $C$ is the CNN's output label of both $\bx$ and $\btx$.
where $\bh(\cdot)_i$ refers to the $i$-th components of $\bh(\cdot)$.
\end{definition}

\begin{figure}[ht]
\centering
\includegraphics[width = 0.58\textwidth]{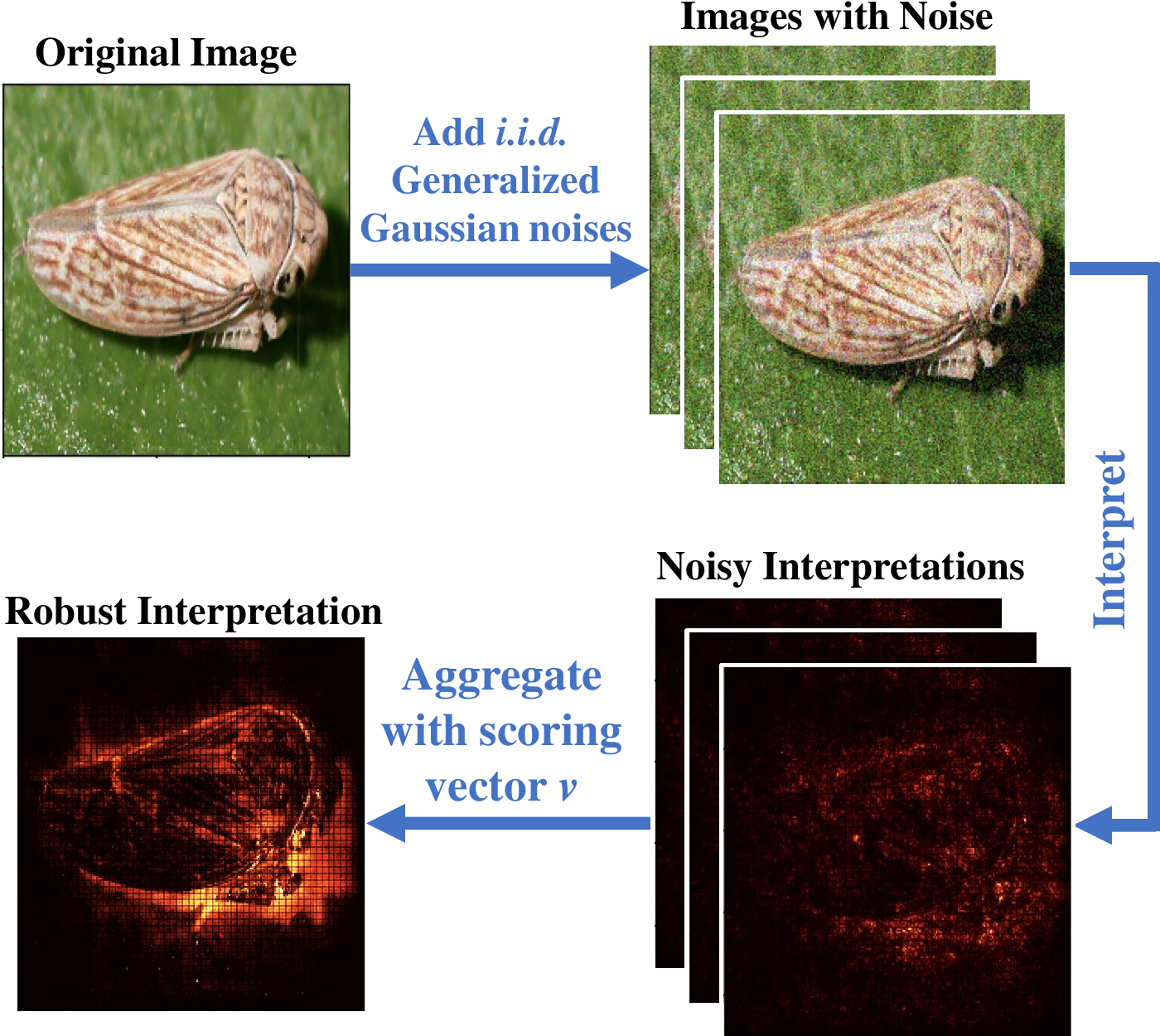}
\vspace{-0.8em}
\caption{Workflow for our \ourmethod{} with certifiable robustness against $\ell_d$-norm attack}
\label{fig:workflow}\vspace{-1.3em}
\end{figure}

\section{The \ourmethod{}  Method}\label{sec:detail_setting}
Our approach to generating robust interpretation is inspired by voting, which has been applied to improve the robustness of many algorithms with real-world applications~\cite{jenkins2000examining}. We will generate $T$ ``voters'' by introducing external noises to the input image, then use a voting rule to aggregate the outputs. Figure~\ref{fig:workflow} shows the workflow of our approach, where generalized normal noises drawn from {\em generalized normal distribution} (GND) defined in Definition~\ref{def:GND}. Here, $\Gamma(\cdot)$ refers to gamma functions. %We use $\Gamma(\cdot)$ to denote the gamma function. %\lirong{add sth like ``and then use a voting rule to aggregate the outputs''?}
%Algorithm~\ref{alg:our} details the proposed \ourmethod, which can provide certifiable robustness. %\lirong{I feel that it is a good idea to move the figure to the main text for better illustration.}

\begin{definition}[Generalized Normal Distribution]\label{def:GND}
A random variable $X$ follows generalized normal distribution $\calG(\mu,\sigma,b)$ if its probability density function is:
{\small
\begin{equation}\nonumber
    \frac{b\cdot\sqrt{\frac{\Gamma(3/b)}{\Gamma(1/b)}}}{2\sigma\Gamma(1/b)}\cdot\exp\left[-\left(\sqrt{\frac{\Gamma(3/b)}{\Gamma(1/b)}}\cdot\frac{|x-\mu|}{\sigma}\right)^b\right],
\end{equation}}
where $\mu$, $\sigma$ and $b$ corresponds to the expectation, standard deviation and shape factor of $X$.% $b$ corresponds to the shape factor of the GND.%\lirong{What is $\Gamma$?}
\end{definition}

%\lirong{Alg 1 may still be confusing: the output is a mean which has nothing to do with $\vec\delta$. }
GND generalizes Gaussian distributions ($b = 2$) and Laplacian distribution ($b=1$). In Algorithm~\ref{alg:our}, we present our \ourmethod{} in detail. 

%\vspace{-1.3em}
\begin{algorithm}
\caption{\ourmethod{} %with Certifiable Robustness
}\label{alg:our}
\begin{algorithmic}[1]
   \STATE {\bfseries Inputs:}  Base interpretation method $\bg$, scoring vector $\bv$, image $\bx$ and the number of samples $T$
   \STATE Generate $i.i.d.$ noises $\bdelta_1,\cdots,\bdelta_T \sim \calG(\bzero,\sigma^2I,d^*)$ %according to the settings in Section~\ref{sec:GND}
   \STATE Calculate the noisy interpretations $\btg_t^*(\bx) \triangleq \bg(\bx+\bdelta_t)$.
   \STATE Re-scale $\btg_t^*(\bx)$ using scoring vector $\bv = (v_1,\cdots,v_n)$. The noisy interpretation after re-scaling is denoted by $\btg_t(\bx)$, where $\btg_t(\bx)_i = v_j$ if and only if $\btg_t^*(\bx)_i$ is ranked $j$-th in $\btg_t^*(\bx)$.
   \STATE{\bfseries Output} %\ourmethod
   Interpretation map 
   ${\ourbm} \triangleq \frac{1}{T}\sum_{t=1}^T \btg_t(\bx)$. %$\frac{1}{T}\sum_{t=1}^T\btg^*_t(\bx)$ which is an estimation of $\bbE\left[\btg^*_t(\bx)\right]$.
\end{algorithmic}
\end{algorithm}

The shape parameter $d^*$ is set according to the prior knowledge of the defender. We assume that the defender knows the attack is based on $\ell_d$-norm where $d \leq d_{\,\text{prior}}$. Especially, $d_{\,\text{prior}}=\infty$ means the defender has no prior knowledge about the attack. Technically, we set the shape parameter $d^*$ according to the following setting:
{%\small
\begin{equation}\nonumber
d^* = \left\{
\begin{array}{ll}
1\,,&\text{when } d_{\,\text{prior}} = 1\\
2\lceil\frac{d_{\,\text{prior}}}{2}\rceil\,,&\text{when } 1<d_{\,\text{prior}}\leq 2\lceil\frac{\ln n}{2}\rceil\\
2\lceil\frac{\ln n}{2}\rceil\,,&\text{when } d_{\,\text{prior}} > 2\lceil\frac{\ln n}{2}\rceil\\
\end{array}  
\right..
\end{equation}}
In other words, $d^*$ is the round-up of $d_{\text{prior}}$ to the next even number, except that $d=1$ or $d$ is sufficiently large. %This is to ensure the smoothness of noises' probability density functions. 
We set an upper threshold on $d^*$ because $\ell_{\ln n}$-norm is close to $\ell_{\infty}$-norm in practice.  W.l.o.g.~we set $v_1\geq\cdots\geq v_n$ for the scoring vector $\bv$. Our assumption guarantees that the pixels ranked higher in $\btg(\bx)$ will contribute more to our \ourmethod{} map. %\lirong{is this upper bound critical in the proof? If so we may want to say it.}%\lirong{didn't understand the last part---not a full sentence.}

%In the  third step of our workflow, we use a scoring vector $\bv$ to aggregate the noisy interpretations $\btg_t^*(\btx)$. Here, the scaling vector first rescale $\btg_t^*(\btx)$. The noisy interpretation after re-scaling is denoted by $\btg_t(\bx)$, where $\btg_t(\bx)_i = v_j$ if and only if $\btg_t^*(\bx)_i$ is ranked $j$-th in $\btg_t^*(\bx)$. Finally, we output $\bbE[\btg_t(\bx)]$ as the our robust interpretation map.

\section{\ourmethod{} with Certifiable Robustness}\label{sec:main}
In this section, we first analysis the robustness of the expected output of Algorithm~\ref{alg:our} in Section~\ref{sec:our_theory}. Then, we show the tradeoff theorem between computational efficiency and robustness in Section~\ref{sec:concentration}. The big picture of our theoretical contributions is presented in Figure~\ref{fig:proofworkflow}.  %\lirong{Now I really think that we should have a clearer definition of ``our \ourmethod''. Do we mean the method that computes the mean, or do we mean Alg 1? It would be confusing if ``our \ourmethod'' refer to multiple things.} 

\begin{figure}[ht]
\centering%\vspace{-1em}
\includegraphics[width = 0.8\textwidth]{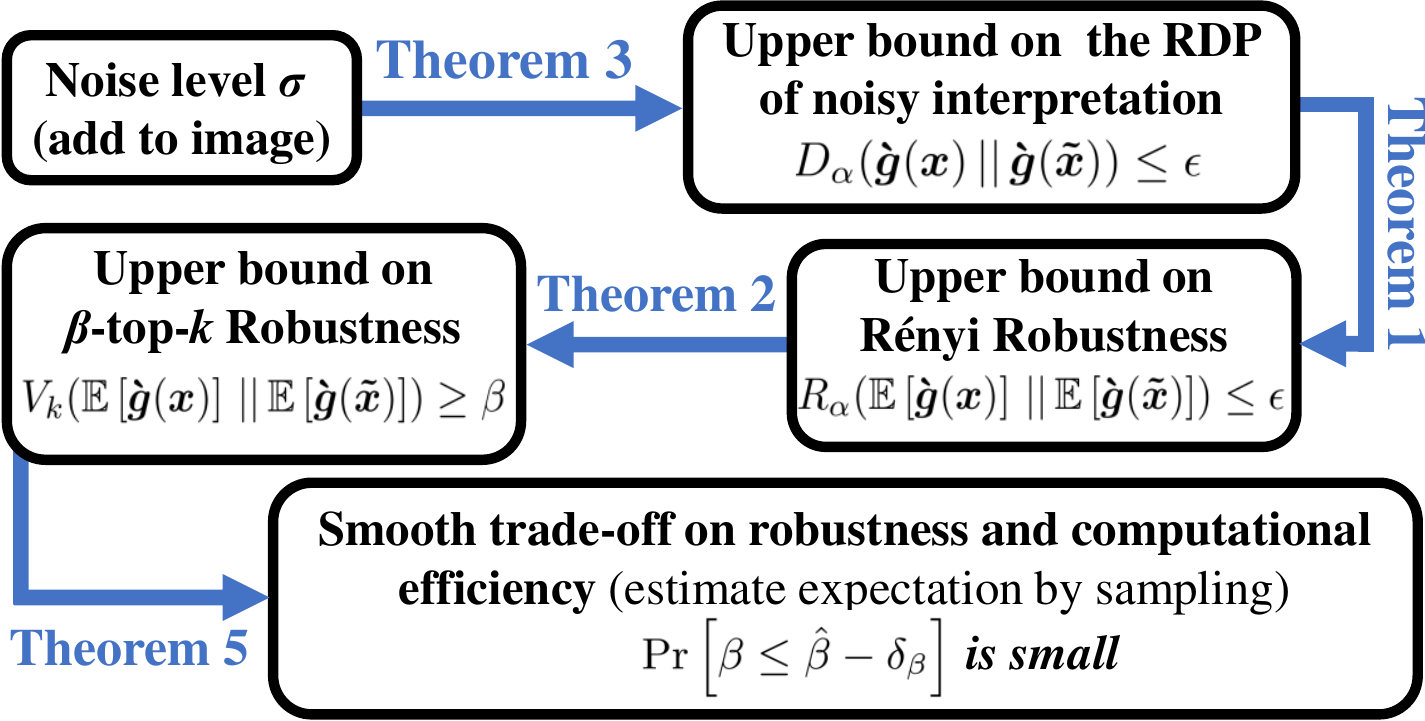}
\vspace{-0.7em}
\caption{The workflow of our proofs to the robustness of our \ourmethod.% All unspecified upper bounds corresponds to the property of our outputted \ourmethod.\color{red}{What does this sentence mean? which upper bound is unspecified?}
}
\label{fig:proofworkflow}
\vspace{-1.3em}
\end{figure}

%$\ourbm(\cdot)$ %\lirong{what is this?}
%can also be guaranteed by the RDP property of $\btg(\cdot)$. %We also present how our approach of robust interpretation and show the relationship between RDP and \btk robustness.

%\subsection{Detailed Setting of Our Approach}\label{sec:detail_setting}

\subsection{The Expected Output of Algorithm~\ref{alg:our}}\label{sec:our_theory}
%\lirong{in the new section 4, start with a big picture roadmap, quickly describe the two subsections: first one on expecteoutpuit, 2mnd is about the smooth tradeoff.}

%Expected Output of \ourmethod{} 
In this section, we discuss the expected output of \ourmethod, which is denoted as $\bbE\left[\btg(\bx)\right]$\footnote{In this subsection, we neglect the footnote $t$ to simplify the notation.}. As we will discuss in Theorem~\ref{theo:privacy_bound}, the noise added to image can guarantee the RDP property of $\btg(\bx)$. According to the intuitions of RDP-robustness connection shown in Section~\ref{sec:prelim}, we may expect that $\bbE\left[\btg(\bx)\right]$ is robust to input perturbations. In all discussions of this subsection, we let $\ourbm = \bbE[\btg(\bx)]$ and $\btm = \bbE\left[\btg(\btx)\right]$. 
Theorem~\ref{theo:renyi_renyi} %(full proof in Appendix~\ref{sec:proof_renyi}) 
connects RDP with \Renyi robustness.%, where \Renyi robustness is a robustness measure firstly defined by us and can be directly guaranteed by RDP.%\lirong{strange to see this sentence here. Has Renyi robustness been defined before? It should be discussed in the Intro, maybe as part of our contributions.}

\begin{theorem}[RDP$\,\to\,$\Renyi robustness]\label{theo:renyi_renyi}
If a randomized function $\btg(\cdot)$ is $(\alpha,\epsilon,L)$-RDP to $\ell_d$ distance, then $\bbE_{\btg}\left[\btg(\cdot)\right]$ is $(\alpha,\epsilon,L)$-\Renyi robust to $\ell_d$ distance.
\end{theorem}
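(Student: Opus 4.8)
The plan is to reduce the statement to a single contraction inequality for the divergences and then read off the conclusion from the RDP hypothesis. Writing $\ourbm = \bbE_{\btg}[\btg(\bx)]$ and $\btm = \bbE_{\btg}[\btg(\btx)]$, it suffices to prove
\begin{equation}\nonumber
\vD_{\alpha}\big(\ourbm \,||\, \btm\big) \;\le\; D_{\alpha}\big(\btg(\bx)\,||\,\btg(\btx)\big),
\end{equation}
because whenever $\|\bx-\btx\|_d \le L$ the right-hand side is at most $\epsilon$ by the assumed $(\alpha,\epsilon,L)$-RDP of $\btg$, and this is exactly the defining inequality of $(\alpha,\epsilon,L)$-\Renyi robustness for $\bbE_{\btg}[\btg(\cdot)]$. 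So the entire content is the claim that collapsing the random output $\btg(\cdot)$ to its coordinatewise expectation cannot increase the \Renyi-type discrepancy.

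First I would fix density-ratio notation. Let $P$ and $Q$ be the output distributions of $\btg(\bx)$ and $\btg(\btx)$ over the output space $\calS \subseteq [0,1]^n$, and let $r = dP/dQ$, so that $e^{(\alpha-1)D_{\alpha}(P||Q)} = \bbE_{y\sim Q}[r(y)^{\alpha}]$, while $\ourbm_i = \bbE_{y\sim Q}[r(y)\,y_i]$ and $\btm_i = \bbE_{y\sim Q}[y_i]$. Since $\tfrac{1}{\alpha-1}>0$ for $\alpha>1$, exponentiating reduces the target to bounding $\prod_{i} \ourbm_i^{\alpha}\,\btm_i^{1-\alpha}$ from above by $\bbE_Q[r^{\alpha}]$.

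The key per-coordinate step is Jensen's inequality, which I view as the data-processing inequality for \Renyi divergence in disguise: forming $\bbE[\btg(\cdot)]$ is a post-processing of $\btg(\cdot)$ (sample a coordinate with probability proportional to the output), and post-processing never increases \Renyi divergence. Concretely, for each $i$ I would introduce the tilted probability measure $d\mu_i = y_i\,dQ/\btm_i$ (legitimate because $y_i \ge 0$ and $\btm_i = \bbE_Q[y_i]$), under which $\bbE_{\mu_i}[r] = \ourbm_i/\btm_i$. Convexity of $t\mapsto t^{\alpha}$ then yields $(\ourbm_i/\btm_i)^{\alpha} \le \bbE_{\mu_i}[r^{\alpha}] = \bbE_Q[r^{\alpha} y_i]/\btm_i$, i.e.
\begin{equation}\nonumber
\frac{\ourbm_i^{\alpha}}{\btm_i^{\alpha-1}} \;\le\; \bbE_Q\!\big[r^{\alpha}\, y_i\big].
\end{equation}

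The step I expect to be the main obstacle is aggregating these $n$ per-coordinate bounds into the single target, since $\vD_{\alpha}$ accumulates the coordinates multiplicatively rather than additively. This is precisely where the \emph{normalization of the output} must be exploited: each re-scaled voter $\btg_t(\bx)$ in Algorithm~\ref{alg:our} is a (sub-)probability vector, so $\sum_i y_i \le 1$ holds pointwise and hence $\sum_i \bbE_Q[r^{\alpha}y_i] = \bbE_Q[r^{\alpha}\sum_i y_i] \le \bbE_Q[r^{\alpha}]$. Combining the coordinatewise inequalities against this identity is what drives $\vD_{\alpha}(\ourbm||\btm)$ below $D_{\alpha}(P||Q)$ and closes the argument; verifying that the output indeed lies in the (sub-)simplex, and carrying the product over coordinates through the combination without that structure leaking, is the delicate part I would spend the most care on.
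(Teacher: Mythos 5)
Your proof is correct and reaches the theorem by a genuinely different route than the paper's. The paper's argument is tied to the structure of Algorithm~\ref{alg:our}: it expands each coordinate as $\bbE[\btg(\bx)]_i=\sum_{j} v_j\,p_{(i,j)}$ in terms of rank probabilities $p_{(i,j)}$, applies the generalized Radon inequality pixel by pixel, then invokes the RDP hypothesis on each pixel's rank distribution (a post-processing of the output), finishing with $\|\bv\|_1=1$ and $v_j\le 1$. You instead prove a self-contained contraction statement at the level of output distributions: for any randomized map whose outputs lie in the (sub-)simplex, passing to the coordinatewise expectation cannot increase the divergence. Your per-coordinate step (Jensen under the tilted measure $d\mu_i=y_i\,dQ/\btm_i$) is Radon's inequality in disguise, so the two proofs share their convexity core, but your route buys three things: it never uses the rank structure of Algorithm~\ref{alg:our}, it invokes RDP once on the joint output rather than per pixel, and it makes rigorous exactly the step the paper fudges --- in the paper's displayed chain the product over $i$ silently disappears after the Radon step, whereas your identity $\sum_i \bbE_Q[r^\alpha y_i]=\bbE_Q\big[r^\alpha\sum_i y_i\big]\le\bbE_Q[r^\alpha]$ is the honest way to aggregate coordinates. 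One caveat you should state explicitly: your closing combination proves the sum form $\sum_i \ourbm_i^{\alpha}/\btm_i^{\alpha-1}\le e^{(\alpha-1)\epsilon}$, not the product form that Definition~\ref{def:renyi_robust} literally writes; since the rest of the paper (notably the proof of Theorem~\ref{theo:renyi_btk}, which minimizes $\sum_i m_i(\tm_i/m_i)^{\alpha}$) treats $\vD_{\alpha}$ as the sum form, the product in the definition is evidently a typo, and your proof targets the statement the paper actually uses downstream.
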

\begin{proof}[Proof of Theorem~\ref{theo:renyi_renyi}]
Let $p_{(i,j)}$ (or $\tp_{(i,j)}$) denote the probability of the $i$-th pixel to be the $j$-th largest in the noisy interpretation (before rescaling) $\bg(\bx+\bdelta)$ (or $\bg(\btx+\bdelta)$). Then, the expectation of the $i$-th pixel in the robust interpretation map $\bbE[\ourbm_i] = \bbE_{\btg}[\btg(\bx)_i] = \sum_{j=1}^n v_j\, p_{(i,j)}$. Similarly, for the perturbed input $\btx$, we have $\bbE[\btm_i] = \bbE_{\btg}[\btg(\btx)_i] = \sum_{j=1}^n v_j\, \tp_{(i,j)}$. By the definition of \Renyi robustness, we have,
\begin{equation}\nonumber
\begin{split}
&\vD_{\alpha}(\bbE_{\btg}\left[\btg(\bx)\right]\,||\;\bbE_{\btg}\left[\btg(\btx)\right])\\
=& \vD_{\alpha}(\bbE\left[\ourbm\right]\,||\;\bbE\left[\btm\right])\\
=&  \frac{1}{\alpha-1} \ln\left(\prod_{i=1}^n \frac{\left(\sum_{j=1}^n v_j\, p_{(i,j)}\right)^{\alpha}}{\left(\sum_{j=1}^k v_j\,\tp_{(i,j)}\right)^{\alpha-1}}\right).
\end{split}
\end{equation}
Then, we apply generalized Radon's inequality~\citep{batinetu2010generalization} and have,
\begin{equation}\nonumber
\begin{split}
\vD_{\alpha}(\bbE_{\btg}\left[\btg(\bx)\right]\,||\;\bbE_{\btg}\left[\btg(\btx)\right])
\leq\frac{1}{\alpha-1} \ln\left(\sum_{j=1}^n v_j\cdot \frac{p_{(i,j)}^{\alpha}}{\tp_{(i,j)}^{\alpha-1}}\right).
\end{split}
\end{equation}
We note that the $(\alpha,\epsilon)$-RDP property of $\btg(\bx)$ provides $\frac{1}{\alpha-1} \ln\left(\sum_{j=1}^n  \frac{p_{(i,j)}^{\alpha}}{\tp_{(i,j)}^{\alpha-1}}\right)\leq \epsilon$. Using the condition of $||\bv||_1 = 1$ and $v_i\leq 1$, we have,
$$\vD_{\alpha}(\bbE_{\btg}\left[\btg(\bx)\right]\,||\;\bbE_{\btg}\left[\btg(\btx)\right])  \leq \epsilon.$$
By now, we already show the \Renyi robustness property of $\bbE_{\btg}[\btg(\cdot)]$. 
\end{proof}

In Theorem~\ref{theo:renyi_btk}, we will show that the \btk robustness property can be guaranteed by the \Renyi robustness property. To simplify our notation, we assume our \ourmethod{} map $\ourbm = (m_1,\cdots,m_n)$ is normalized ($||\ourbm||_1 = 1$). We further let $m_{i^*}$ denote the $i$-th largest component in $\ourbm$. Let $k_0 = \lfloor(1-\beta) k\rfloor+1$ to denote the minimum number of changes to violet $\beta$-ratio overlapping of top-$k$. %\lirong{failed to follow.} 
Let $\calS = \{k-k_0, \cdots,k+k_0+1\}$ denote the set of last $k_0$ components in top-$k$ and the top $k_0$ components out of top-$k$.%\lirong{To some extent this is only a one-way connection, i.e. from RDP to top-k. The short name of Theorem 1 is misleading.}

%\textcolor{red}{[What is $\boldsymbol \delta_t$? not defined in theorem.]}

\begin{theorem}[\Renyi robustness $\to$ \btk robustness]\label{theo:renyi_btk}
Let function $\bh(\cdot)$ to be $(\alpha,\epsilon,L)$-\Renyi robustness to $\ell_d$ distance. Then, $\ourbm = \bh(\bx)$ is $\beta$-Top-$k$ robust to $\ell_d$-norm attack of size L, if $\epsilon \leq \epsilon_{\text{\emph{robust}}}(\alpha;\beta,k;\ourbm)$, who is defined as
{\small\vspace{-0.5em}
\begin{equation}\nonumber
\begin{split}
%&\qquad\qquad\quad\\ %\\
%\triangleq\;& -\ln\left( 2k_0\left(\frac{1}{2k_0}\sum_{i\in\calS}(m_{i^*})^{1-\alpha}\right)^{\frac{1}{1-\alpha}}+\sum_{i\not\in\calS}m_{i^*}\right).
%\end{split}
%\end{equation}
%where $\ourbm = \bg^*(\btx)$.
%Especially, when $\alpha \to 1^+$, $M$ is $\beta$-Top-$k$ robust to $\ell_d$ attack of size L if,
%$$\epsilon \leq -\ln\left( 2k_0\cdot\exp\left(\frac{1}{2k_0}\sum_{i\in\calS}\ln m_{i^*}\right)+\sum_{i\not\in\calS}m_{i^*}\right).$$
%\begin{equation}\nonumber
%\begin{split}
%\text{where}\;\,&
-\ln\left( 2k_0\left(\frac{1}{2k_0}\sum_{i\in\calS}(m_{i^*})^{1-\alpha}\right)^{\frac{1}{1-\alpha}}+\sum_{i\not\in\calS}m_{i^*}\right).
\end{split}
\end{equation}}
\end{theorem}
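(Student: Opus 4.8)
The plan is to argue by contraposition. I will show that whenever the top-$k$ overlap drops below $\beta$ — equivalently, whenever at least $k_0=\lfloor(1-\beta)k\rfloor+1$ of the original top-$k$ pixels are displaced in $\btm\triangleq\bh(\btx)$ — the \Renyi-robustness quantity $\vD_\alpha(\bh(\bx)\,\|\,\bh(\btx))$ of Definition~\ref{def:renyi_robust} is forced to exceed $\epsilon_{\text{robust}}$. Since $(\alpha,\epsilon,L)$-\Renyi robustness guarantees $\vD_\alpha(\bh(\bx)\,\|\,\bh(\btx))\le\epsilon\le\epsilon_{\text{robust}}$ for every admissible $\btx$, no such displacement can occur, which is exactly $\beta$-Top-$k$ robustness (Definition~\ref{def:robust}). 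So the whole task becomes lower-bounding $\vD_\alpha$ over all ``displacing'' $\btm$, and showing that minimum equals $-\ln(\cdot)$ with the bracketed quantity in the statement.

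The first step is to localize the cheapest displacement. Writing $m_{1^*}\ge\cdots\ge m_{n^*}$ for the sorted coordinates of $\ourbm$, any configuration that breaks the $\beta$-overlap must push $k_0$ currently-ranked pixels below the cut-off and pull $k_0$ currently-excluded pixels above it. Because each factor $(m_{i^*})^{\alpha}/(\tilde m_i)^{\alpha-1}$ in $\vD_\alpha$ is monotone in $\tilde m_i$, I expect the minimum-divergence displacement to touch only the $2k_0$ coordinates straddling the cut-off, namely the index set $\calS$ of the last $k_0$ entries inside the top-$k$ together with the first $k_0$ entries outside it, while every coordinate with $i\notin\calS$ is left at its value $m_{i^*}$. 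I would make this ``without loss'' claim rigorous by an exchange argument: reallocating mass toward or away from any coordinate outside $\calS$ either fails to advance the required swap or strictly raises $\vD_\alpha$, so an extremal adversarial $\btm$ may be assumed to differ from $\ourbm$ only on $\calS$, which is precisely why $\calS$ appears in the formula and why the untouched coordinates contribute the additive residual $\sum_{i\notin\calS}m_{i^*}$.

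The second step reduces the problem to a finite-dimensional minimization over the boundary block: among all $\btm$ sitting on the boundary of the ``displaced'' region (where the $2k_0$ coordinates in $\calS$ meet at a common crossover value) I minimize the product $\prod_i (m_{i^*})^{\alpha}/(\tilde m_i)^{\alpha-1}$ subject to $\|\btm\|_1=1$. Solving the associated Lagrangian, the optimal reallocation of the boundary mass turns out to be non-uniform, and resubstituting it collapses the $\calS$-block into the generalized power mean $2k_0\big(\tfrac{1}{2k_0}\sum_{i\in\calS}(m_{i^*})^{1-\alpha}\big)^{1/(1-\alpha)}$. Here I would invoke the generalized Radon inequality~\citep{batinetu2010generalization} — the same device used in the proof of Theorem~\ref{theo:renyi_renyi} — to pass between the product form of $\vD_\alpha$ and this power-mean form; the exponent $1-\alpha$ should drop out of the stationarity condition automatically. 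Taking $-\ln$ of the resulting optimal value then yields exactly $\epsilon_{\text{robust}}(\alpha;\beta,k;\ourbm)$, completing the contradiction.

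The hard part will be the second step. The normalization $\|\btm\|_1=1$ couples all coordinates, so I cannot analyze the $\calS$-block in true isolation; I must confirm that the Lagrangian stationary point, rather than a degenerate boundary point where some $\tilde m_i\to 0$, is the genuine minimizer under the swap constraint — and this is exactly where the tie-breaking convention in $V_k$ enters, since a displacement requires a strict, not merely weak, reordering at the cut-off. A secondary subtlety is certifying that restricting the attack to $\calS$ is lossless: if the exchange argument of the first step is only an over-approximation, the derived threshold would be a conservative (still valid, but not tight) sufficient condition rather than the stated $\epsilon_{\text{robust}}$. I therefore expect most of the technical effort to go into the monotonicity/exchange lemma and the verification that the power-mean configuration is the constrained global minimum.
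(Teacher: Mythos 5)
Your overall strategy coincides with the paper's: frame the theorem as a threshold statement, compute the minimum R\'enyi divergence over all perturbed maps $\btm$ that violate the $\beta$-overlap condition, argue by exchange that the minimizer displaces exactly $k_0$ components and touches only the boundary set $\calS$ (these are precisely the paper's Claims $(i)$, $(ii)$, $(ii')$, $(iii')$, proved by the same pairwise-swap argument you sketch), then solve the resulting constrained optimization (the paper writes it as a KKT system) and take $-\ln$ of the optimal value. So the architecture of your proof is not in question.

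The genuine gap is in your second step, and it is quantitative, not cosmetic. You minimize $\prod_i (m_{i^*})^{\alpha}/(\tm_i)^{\alpha-1}$ — read as a sum, $\sum_i m_{i^*}^{\alpha}\,\tm_i^{1-\alpha}$, which is the direction written in Definition~\ref{def:renyi_robust} — and assert that ``the exponent $1-\alpha$ should drop out of the stationarity condition automatically.'' It does not. For that objective, stationarity under $\|\btm\|_1=1$ with a common crossover value $c$ on $\calS$ gives $c\propto\bigl(\frac{1}{2k_0}\sum_{i\in\calS}m_{i^*}^{\alpha}\bigr)^{1/\alpha}$, i.e.\ the $\alpha$-power mean $M_\alpha$, and back-substitution gives the optimal divergence $\frac{\alpha}{\alpha-1}\ln\bigl(2k_0 M_\alpha+\sum_{i\notin\calS}m_{i^*}\bigr)$ — a different formula from the stated $\epsilon_{\text{robust}}$. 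To obtain the $(1-\alpha)$-power mean and the $-\ln(\cdot)$ form of the theorem, one must minimize the divergence in the \emph{reversed} direction, $\sum_i m_{i^*}^{1-\alpha}\,\tm_i^{\alpha}$, which is what the paper's proof actually does: its surrogate is $s=\sum_i m_i(\tm_i/m_i)^{\alpha}$, whose stationary point is $\tm_i\propto m_{i^*}$ off $\calS$ and $c\propto\breve{m}=\bigl(\frac{1}{2k_0}\sum_{i\in\calS}m_{i^*}^{1-\alpha}\bigr)^{1/(1-\alpha)}$ on $\calS$, yielding optimal value $Z^{1-\alpha}$ and hence divergence $-\ln Z$ with $Z=2k_0\breve{m}+\sum_{i\notin\calS}m_{i^*}$. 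R\'enyi divergence is asymmetric, so the two minimizations are not interchangeable; the paper itself is loose on this point (its Definition~\ref{def:renyi_robust} and its proof use opposite directions), but carried out as you wrote it, your Lagrangian step fails to produce the claimed threshold. Two smaller corrections: the minimizer does \emph{not} leave coordinates outside $\calS$ ``at their value $m_{i^*}$'' — normalization forces them to be rescaled proportionally, $\tm_i=m_{i^*}/Z$ — and Radon's inequality plays no role here (the paper invokes it only for Theorem~\ref{theo:renyi_renyi}); plain stationarity suffices once the exchange claims are in place.
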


Theorem\,\ref{theo:renyi_btk} shows the level of \Renyi robustness required by the \btk robustness condition. We provide some insights on $\epsilon_{\text{robust}}$. 
There are two terms inside of the $\ln(\cdot)$ function. The first term %$2k_0\left(\frac{1}{2k_0}\sum_{i\in\calS}(m_{i^*})^{1-\alpha}\right)^{\frac{1}{1-\alpha}}$ 
corresponds to the minimum information loss to replace $k_0$ items from the top-$k$. The second term corresponds to the unchanged components in the process of replacement. The proof of Theorem~\ref{theo:renyi_btk} can be found in~\ref{app:renyi_btk}.

Combining the results in Theorem~\ref{theo:renyi_renyi} and Theorem~\ref{theo:renyi_btk}, we know that \btk robustness can be guaranteed by the \Renyi differential privacy property on the randomized interpretation algorithm $\btg(\bx) = f_{\bv}(\bg(\bx+\bdelta,C))$, where $f_{\bv}$ is the re-scaling function using scaling vector $\bv$. Next, we show the RDP property of $\btg(\cdot)$. In the remainder of this paper,  we let $\Gamma(\cdot)$ to denote the gamma function and let $\mathbbm{1}(\cdot)$ to denote the indicator function. To simplify notations, we let $\epsilon_{\alpha}\Big(\frac{L}{\sigma}\Big) = \frac{1}{\alpha-1}{\ln\left[\frac{\alpha}{\alpha-1}\exp\Big(\frac{(\alpha-1)L)}{\sqrt 2 \sigma}\Big)+\frac{\alpha-1}{2\alpha-1}\exp\Big(\frac{-\alpha L}{\sqrt 2 \sigma}\Big)\right]}$ and  $\epsilon_{d^*}\Big(\frac{L}{\sigma}\Big)= \frac{1}{\Gamma(1/{d^*})}\sum_{i=1}^{{d^*}/2}\binom{2i}{{d^*}}\left(\frac{L}{\sigma^*}\right)^{2i}\Gamma\Big(\frac{{d^*}+1-2i}{{d^*}}\Big),$ where $\sigma^* = \sqrt{\frac{\Gamma(1/b)}{\Gamma(3/b)}} \,\sigma$.

\begin{theorem}[Noise level$\,\to\,$RDP]\label{theo:privacy_bound}
For any re-scaling function $f_{\bv}(\cdot)$, let $\btg(\bx) = f_{\bv}(\bg(\bx+\bdelta))$ %be a randomized function 
where $\bdelta\sim\calG(\bzero,\sigma^2 I,d^*)$. Then, $\btg$ has the following properties with $\ell_{d}$ distance:\\
$(i)$ $\left(1,\,\epsilon_{d^*}\big(\frac{L}{\sigma}\cdot\exp\left[\mathbbm{1}(d>2\lceil\frac{\ln n}{2}\rceil)\right]\big),L\right)$-RDP for all $d\geq 2$.\\
$(ii)$ For all $\alpha\geq 1$, we have $\left(\alpha,\frac{\alpha L^2}{2\sigma^2},L\right)$-RDP when ${d} \in(1,2] \;$ and $\;\left(\alpha,\epsilon_{\alpha}\big(\frac{L}{\sigma}\big),L\right)$-RDP when ${d} = 1$.
%\lirong{I'd suggest order the results in the increasing order of $d$.}
\end{theorem}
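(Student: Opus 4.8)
The plan is to strip away the two deterministic maps and reduce everything to the additive-noise mechanism. Since the base method $\bg$ and the re-scaling $f_{\bv}$ are both deterministic, $\btg(\bx)=f_{\bv}(\bg(\bx+\bdelta))$ is a deterministic post-processing of the single random object $\bx+\bdelta$. \Renyi divergence can only decrease under (possibly randomized) post-processing, so for any admissible pair $\bx,\btx$ I would bound
\[
D_\alpha\big(\btg(\bx)\,\|\,\btg(\btx)\big)\;\le\;D_\alpha\big(\bx+\bdelta\,\|\,\btx+\bdelta\big),
\]
and it then suffices to control the right-hand side, namely the \Renyi divergence between the GND law centered at $\bx$ and the one centered at $\btx$. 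Writing $c_i=\bx_i-\btx_i$ and using that $\bdelta\sim\calG(\bzero,\sigma^2 I,d^*)$ has independent coordinates, the divergence between product measures tensorizes, so by translation invariance
\[
D_\alpha\big(\bx+\bdelta\,\|\,\btx+\bdelta\big)\;=\;\sum_{i=1}^n D_\alpha\big(\calG(c_i,\sigma,d^*)\,\|\,\calG(0,\sigma,d^*)\big),
\]
reducing the whole theorem to a one-dimensional computation of the translate divergence of a GND, followed by aggregation over coordinates under the constraint $\|\bx-\btx\|_d\le L$.

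For the one-dimensional piece I would treat the three regimes separately, each an explicit integral against the density of Definition~\ref{def:GND}. When $d\in(1,2]$ the shape is $d^*=2$, the noise is Gaussian, and the translate divergence has the familiar closed form $\alpha c_i^2/(2\sigma^2)$; summing gives $\alpha\|\bx-\btx\|_2^2/(2\sigma^2)$, and since $\|\cdot\|_2\le\|\cdot\|_d\le L$ for $d\le 2$ this is at most $\alpha L^2/(2\sigma^2)$, proving the first half of $(ii)$. When $d=1$ the shape is $d^*=1$, the noise is Laplacian with scale $\sigma^*=\sigma/\sqrt2$, and direct integration of $(p(x)/q(x))^\alpha$ against $q$ yields the two-exponential expression defining $\epsilon_\alpha(\cdot)$; to pass from one coordinate to the $\ell_1$ budget I would show the per-coordinate divergence is convex and increasing in $|c_i|$ and vanishes at $0$, so that maximizing $\sum_i g(|c_i|)$ subject to $\sum_i|c_i|=\|\bx-\btx\|_1\le L$ concentrates all mass on a single coordinate, giving the bound $\epsilon_\alpha(L/\sigma)$. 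For $(i)$ I specialize to $\alpha=1$, where the \Renyi divergence degenerates to the KL divergence $\bbE_{x\sim\calG(0,\sigma,d^*)}[\ln p(x)-\ln p(x-c_i)]$; since $\ln p$ is affine in $-(|x|/\sigma^*)^{d^*}$ and $d^*$ is even, I would expand $(x-c_i)^{d^*}-x^{d^*}$ by the binomial theorem, discard the odd-power terms whose GND moments vanish by symmetry, and evaluate the surviving even moments as gamma-function ratios, which is exactly the content of $\epsilon_{d^*}(\cdot)$.

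The last step is the aggregation under the attack norm. For the KL bound the summed contribution is governed by $\sum_i|c_i|^{d^*}=\|\bx-\btx\|_{d^*}^{d^*}$, so when the chosen shape matches the attack ($d^*=d$) this is at most $L^{d^*}$ and the argument of $\epsilon_{d^*}$ is simply $L/\sigma$. When the defender caps the shape at $d^*=2\lceil\tfrac{\ln n}{2}\rceil$ while the attack uses a larger $d$, I would pass from $\|\cdot\|_d$ to $\|\cdot\|_{d^*}$ using $\|\bx-\btx\|_{d^*}\le n^{1/d^*-1/d}\|\bx-\btx\|_d\le n^{1/d^*}L$; because $n^{1/d^*}=n^{1/(2\lceil \ln n/2\rceil)}\le e$, this inflates the effective displacement by at most $\exp(1)$, which is precisely the factor $\exp[\mathbbm 1(d>2\lceil\tfrac{\ln n}{2}\rceil)]$ in the argument of $\epsilon_{d^*}$ in $(i)$.

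The main obstacle is not the reduction but the exact closed forms in the one-dimensional step. Carrying the Laplacian integral through to the precise two-exponential $\epsilon_\alpha$, and computing the even-shape KL divergence so that the binomial-and-gamma expression $\epsilon_{d^*}$ emerges cleanly, are the delicate calculations; the convexity/monotonicity lemma that justifies concentrating the $\ell_1$ displacement on one coordinate is smaller but necessary, and the $n^{1/d^*}\le e$ bookkeeping must be verified carefully so that the correction factor is genuinely $\exp(1)$ and not some residual $n$-dependent quantity.
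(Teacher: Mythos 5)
Your route coincides with the paper's own proof in every structural step: you reduce to the additive-noise mechanism via post-processing of \Renyi divergence, settle case $(ii)$ by the standard Gaussian and Laplacian RDP bounds combined with $\|\bx-\btx\|_2\le\|\bx-\btx\|_d\le L$ for $d\in(1,2]$, obtain the one-dimensional even-shape GND bound at $\alpha\to 1$ by exactly the computation of Lemma~\ref{lem:upper} (binomial expansion of the shifted power, vanishing odd moments by symmetry, gamma-function integrals), and recover the $\exp\left[\mathbbm{1}(d>2\lceil\frac{\ln n}{2}\rceil)\right]$ factor from $\|\bx-\btx\|_{d^*}\le n^{1/d^*-1/d}\|\bx-\btx\|_d$ together with $n^{1/d^*}\le n^{1/\ln n}=e$. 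So there is no difference of method in case $(ii)$ or in the one-dimensional calculation behind case $(i)$.

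The one genuine gap is the coordinate aggregation in case $(i)$, a step you make explicit (the paper leaves it implicit) but justify incorrectly. The per-coordinate KL divergence you compute is a polynomial $\sum_{j=1}^{d^*/2}a_j|c_i|^{2j}$ with positive coefficients, so the summed divergence is \emph{not} ``governed by $\sum_i|c_i|^{d^*}$'': only the top-order term is. The lower-order pieces $\sum_i|c_i|^{2j}=\|c\|_{2j}^{2j}$ with $2j<d^*$ are not bounded by $L^{2j}$ under the constraint $\|c\|_{d^*}\le L$; spreading the budget evenly, $c_i=n^{-1/d^*}L$ for all $i$, satisfies $\|c\|_{d^*}=L$ yet gives $\sum_i|c_i|^{2j}=n^{1-2j/d^*}L^{2j}$, e.g.\ a quadratic contribution of $\sqrt{n}\,a_1L^2$ when $d^*=4$. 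Your concentration lemma (convex, increasing, vanishing at zero, hence superadditive, hence one-coordinate worst case) is sound for the $\ell_1$/Laplacian case, but it does not transfer here: substituting $u_i=|c_i|^{d^*}$, each term $u_i^{2j/d^*}$ is \emph{concave} in $u_i$, so under $\sum_i u_i\le L^{d^*}$ the sum is maximized by spreading, not concentrating. In fairness, the paper's proof is silent on exactly this point --- Lemma~\ref{lem:upper} is one-dimensional, and the $n$-dimensional claim of $(i)$ is asserted ``according to the same reason as $(ii)$'' (a reason that genuinely suffices only for the Gaussian case, where the single term $2j=d^*=2$ makes the $\ell_2$ reduction exact) --- so you have surfaced, rather than introduced, the weak link. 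But as written your aggregation argument would fail under scrutiny, and completing it requires a real idea: either restrict attention to the top-order term, accept a dimension-dependent inflation of the lower-order terms, or prove a multivariate analogue of Lemma~\ref{lem:upper} directly.
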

\begin{proof}
According to the {\em post-processing} property of RDP~\citep{mironov2017renyi}, we know that 
$D_{\alpha}\big(\btg(\bx)\,||\; \btg(\btx)\big) \leq D_{\alpha}(\bx+\bdelta\,||\,\btx+\bdelta).$
When $d\in(1,2]$, we always have $||\bx-\btx||_{d}\geq ||\bx-\btx||_{2}$. Thus, all conclusion requiring $||\bx-\btx||_{d}\leq L$ will also hold when the requirement becomes $||\bx-\btx||_{2}\leq L$. Then, the $d\in(1,2]$ case of $(ii)$ in Theorem~\ref{theo:privacy_bound} follows by the standard conclusion of RDP on Gaussian mechanism and the $d=1$ case follows by the standard conclusion of RDP on Laplacian mechanisms. In Lemma~\ref{lem:upper}, we prove the RDP bound for generalized normal mechanisms of even shape parameters. This bound can be directly applied to bound the KL-privacy of generalized normal mechanisms~\cite{wang2016average}. %See Appendix~\ref{sec:proof_lem4} for the full proof. %\lirong{give a reference to the proof of the lemma in appendix. Also, try to avoid using ``highly nontrivial'', unless you can elaborate on it (the proof is no more than 1 page...). The only way to say it is that the lemma addresses a popular open question.}

\begin{lemma}\label{lem:upper}
For any positive even shape factor $b$, letting $x\sim\calG(0,\sigma,b)$ and $\tilde x\sim \calG(L,\sigma,b)$,  we have
$$\lim_{\alpha\to1^{+}} D_{\alpha}(x\,||\,\tilde x)
= \epsilon_{d^*}\big(L/\sigma\big).$$
%where $\sigma^* = \sqrt{\frac{\Gamma(1/b)}{\Gamma(3/b)}} \,\sigma$. 
\end{lemma}
\begin{proof}[Proof of Lemma~\ref{lem:upper}]
By the definition of \Renyi divergence, we have,
{\small
\begin{equation}\nonumber
\begin{split}
& \lim_{\alpha\to 1} D_{\alpha}(x||x') = \lim_{\alpha\to 1} \left( \frac{1}{\alpha-1}\cdot\ln\int_{-\infty}^{\infty}\frac{b\cdot\exp\left(-(\frac{x-\mu}{\sigma^*})^b\right)}{2\sigma^*\Gamma(1/b)}\frac{\exp\left(-\alpha(\frac{x}{\sigma^*})^b\right)}{\exp\left(-\alpha(\frac{x-\mu}{\sigma^*})^b\right)}\dx\right).
\end{split}
\end{equation}
}
Because we interested in the behaviour of $D_{\alpha}$ when $\alpha\to 1$, to simplify notation, we let $\delta = \alpha-1$. when $\alpha-1 \to 0$, we apply first-order approximation to $\exp(\cdot)$ and have,
{\small
\begin{equation}\nonumber
\begin{split}
& \lim_{\alpha\to 1} D_{\alpha}(x||x') = \lim_{\delta\to 0^{+}} \left( -\delta^{-1}\cdot\ln\int_{-\infty}^{\infty}\frac{b\cdot\exp\left(-(\frac{x}{\sigma^*})^b\right)}{2\sigma^*\Gamma(1/b)}\left(1-\delta(\frac{x}{\sigma^*})^b+\delta(\frac{x-\mu}{\sigma^*})^b\right)\dx\right).
\end{split}
\end{equation}
}
Because $\frac{b\cdot\exp\left(-(\frac{x}{\sigma^*})^b\right)}{2\sigma^*\Gamma(1/b)}$ is the PDF of $\calG(0,\sigma,b)$, we have,
{\small
\begin{equation}\nonumber
\begin{split}
& \lim_{\alpha\to 1} D_{\alpha}(x||x') = \lim_{\delta\to 0^{+}} \left( -\delta^{-1}\cdot\ln\left[1+\delta\int_{-\infty}^{\infty}\frac{b\cdot\exp\left(-(\frac{x}{\sigma^*})^b\right)}{2\sigma^*\Gamma(1/b)}\left(-(\frac{x}{\sigma^*})^b+(\frac{x-\mu}{\sigma^*})^b\right)\dx\right]\right).
\end{split}
\end{equation}
}
By applying first-order approximation to $\ln(\cdot)$, we have, 
{
\begin{equation}\nonumber
\begin{split}
& \lim_{\alpha\to 1} D_{\alpha}(x||x')=\int_{-\infty}^{\infty}\frac{b\cdot\exp\left(-(\frac{x}{\sigma^*})^b\right)}{2\sigma^*\Gamma(1/b)}\left((\frac{x-\mu}{\sigma^*})^b-(\frac{x}{\sigma^*})^b\right)\dx.
\end{split}
\end{equation}
}
Then, we expand $\left(\frac{x-\mu}{\sigma^*}\right)^b$ and have,
{
\begin{equation}\nonumber
\begin{split}
& \lim_{\alpha\to 1} D_{\alpha}(x||x')=\int_{-\infty}^{\infty}\frac{b\cdot\exp\left(-(\frac{x}{\sigma^*})^b\right)}{2\sigma^*\Gamma(1/b)}\sum_{i=1}^{b}\binom{i}{b}\left(\frac{\mu}{\sigma^*}\right)^{i}\left(\frac{x}{\sigma^*}\right)^{b-i}\dx.
\end{split}
\end{equation}
}
When $b-i$ is odd, $\exp\left(-(\frac{x}{\sigma^*})^b\right)\left(\frac{x}{\sigma^*}\right)^{b-i}$ is an odd function and the integral will be zeros. When $b-i$ is even, it will become an even function. Thus, we have,
{
\begin{equation}\nonumber
\begin{split}
& \lim_{\alpha\to 1} D_{\alpha}(x||x')=\int_{0}^{\infty}\frac{b\cdot\exp\left(-(\frac{x}{\sigma^*})^b\right)}{\sigma^*\Gamma(1/b)}\sum_{i=1}^{b/2}\binom{2i}{b}\left(\frac{\mu}{\sigma^*}\right)^{2i}\left(\frac{x}{\sigma^*}\right)^{b-2i}\dx.
\end{split}
\end{equation}
}
Through substituting $\left(\frac{x}{\sigma^*}\right)^b$, we have,
{
\begin{equation}\nonumber
\begin{split}
\lim_{\alpha\to 1} D_{\alpha}(x||x')
=&\int_{0}^{\infty}\frac{\exp\left(-y\right)}{\Gamma(1/b)}\sum_{i=1}^{b/2}\binom{2i}{b}\left(\frac{\mu}{\sigma^*}\right)^{2i}y^{(1-2i)/b}\dy.
\end{split}
\end{equation}
}
Finally, Lemma~\ref{lem:upper} follows by the definition of Gamma function.
\end{proof}
%The privacy bound in Theorem~\ref{theo:privacy_bound} only work on $\ell_1,\ell_2,\ell_{\text{even}}$-norm attacks. We then connect $\ell_d$-norm attack for $d\in[1,\infty]$ with the privacy bounds shown in Theorem~\ref{theo:privacy_bound}.
When $d\leq 2\lceil\frac{\ln n}{2}\rceil$, we always have $d^*\geq d$ and $(i)$ of this case holds according to the same reason as $(ii)$. When $d > 2\lceil\frac{\ln n}{2}\rceil$, we have $||\bx-\btx||_{d^*} \leq n^{1/d^*-1/d}\cdot||\bx-\btx||_d \leq e\cdot||\bx-\btx||_d$.  %For the case of $\ell_{\infty}$-norm, we set $d^* \approx \ln n$ in our noise obeying generalized normal distribution. Thus, we have \begin{equation}\nonumber
%\begin{split}
%||\bx-\btx||_{\ln n} \leq n^{1/\ln n}||\bx-\btx||_{\infty} \approx e\cdot||\bx-\btx||_{\infty}.
%\end{split}
%\end{equation}
Thus, $(i)$ of Theorem~\ref{theo:privacy_bound} also holds when $d > 2\lceil\frac{\ln n}{2}\rceil$.
\end{proof}

%\subsection{Proof of Lemma~\ref{lem:upper}}\label{sec:proof_lem4}

Combining the conclusions of Theorem~\ref{theo:renyi_renyi}-\ref{theo:privacy_bound}, we get the theoretical robustness of \ourmethod{} in Table~\ref{tab:result}. In the table, $\epsilon^{-1}(\cdot)$ denotes the inverse function of $\epsilon(\cdot)$.%\lirong{DId we say that this is much better than previous bound? If not, add the discussions.}

%\lirong{At some point, describe }
\begin{table*}[ht]
\small
\centering
\renewcommand\arraystretch{1.35}
\resizebox{1.00\textwidth}{!}{
\begin{tabular}{|p{2.28cm}|p{3.19cm}|p{8.45cm}|}
\hline
Prior knowledge & Distribution of noise & Maximum attack size $L$ without violating \btk robustness\\
\hline 
$d_{\,\text{prior}} = 1$    & Laplacian Distribution & $L = \sigma \cdot \sup_{\alpha > 1}\epsilon_{\alpha}^{-1}(\epsilon_{\text{robust}}(\alpha))$\\
\hline
$d_{\,\text{prior}}\in(1,2]$     & Gaussian Distribution & $L = \sigma\cdot\sup_{\alpha > 1}\sqrt{2\epsilon_{\text{robust}}(\alpha)/\alpha}$ \\
\hline
$d_{\,\text{prior}}\in(2,\infty]$ & GND with $b = d^*$    & $L = \sigma\cdot \exp\left(-\mathbbm{1}(d>2\lceil\frac{\ln n}{2}\rceil)\right) \cdot \epsilon_{d^*}^{-1}\left(\lim_{\alpha\to 1^+}\epsilon_{\text{robust}}(\alpha)\right)$\\
\hline
\end{tabular}}
\vspace{-0.7em}
\caption{Theoretical \btk robustness for $\ourbm = \bbE\left[\btg(\bx)\right]$ when the defender has different prior knowledge on the attack types.}\label{tab:result}
\vspace{-1.2em}
\end{table*}

\subsection{The Smooth Trade-off between Robustness and Computational Efficiency}\label{sec:concentration}
%\lirong{I feel that it is good to make an algorithm environment for the framework proposed in this section though it is simple because then we can refer to it as the smooth tradeoff framework. Or maybe let's make Alg 1 the sampling based algorithm, and the theoretical analysis in the previous sections are done for the mean or $T=\infty$?}

In all analysis in Section~\ref{sec:our_theory}, we assumed that the value of $\ourbm = \bbE[\btg(\bx)]$ in Algorithm~\ref{alg:our} can be computed efficiently. %In other words, we need to generate infinite samples to achieve the theoretical robustness in Table~\ref{tab:result}. %\lirong{Technically this is not necessarily true because closed form may exist.} 
However, there may not even exist a closed form expression of $\ourbm$ and thus $\ourbm$ may cannot not be computed efficiently. In this section, we study the robustness-computational efficiency trade-off when $\ourbm$ is approximated through sampling. That is, approximate $\ourbm$ using $\sum_{t=1}^T \btg_t(\bx)$ (the same procedure as Algorithm~\ref{alg:our}). To simplify notation, we let $\hat \beta$ to denote the calculated top-$k$ robustness from Table~\ref{tab:result}. We use $\beta$ to denote the real robustness parameter of $\ourbm$. We note that our approach will become more computational-efficient when $T$ becomes smaller. In Theorem~\ref{theo:concentration}, we show that the \Renyi robustness parameter $\epsilon_{\text{robust}}$ will have a larger probability to be larger when the number of sample $T$ becomes larger. The conclusion on attack size $L$ or robust parameter $\beta$ follows by applying Theorem~\ref{theo:concentration} to Table~\ref{tab:result} or Theorem~\ref{theo:renyi_btk} respectively. The formal version of Theorem~\ref{theo:concentration} can be found in~\ref{sec:app:concen}.

\begin{theorem}[Smooth Trade-off Theorem, Informal]~\label{theo:concentration}
Letting $\hat{\epsilon}_{\text{\emph{robust}}}$ to denote the estimated \Renyi robustness parameter from $T$ samples, we have
{
\begin{equation}\nonumber
\Pr\left[{\epsilon}_{\text{\emph{robust}}} \geq (1-\delta_{\epsilon})\hat{\epsilon}_{\text{\emph{robust}}}\right]
\geq 1- \text{negl}\left(T\right),\end{equation}} 
where negl$(\cdot)$ refer to a negligible function.
\end{theorem}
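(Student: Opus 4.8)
The plan is to view both $\epsilon_{\text{robust}}$ and $\hat{\epsilon}_{\text{robust}}$ as the \emph{same} deterministic functional $\Phi$ evaluated at two different vectors: the true expected map $\ourbm = \bbE[\btg(\bx)]$ and its empirical average $\hat{\ourbm} = \frac{1}{T}\sum_{t=1}^T \btg_t(\bx)$. Concretely, from Theorem~\ref{theo:renyi_btk},
$$\Phi(\ourbm) = -\ln\!\left(2k_0\Big(\tfrac{1}{2k_0}\sum_{i\in\calS}(m_{i^*})^{1-\alpha}\Big)^{\frac{1}{1-\alpha}} + \sum_{i\notin\calS}m_{i^*}\right),$$
so that $\epsilon_{\text{robust}} = \Phi(\ourbm)$ and $\hat{\epsilon}_{\text{robust}} = \Phi(\hat{\ourbm})$. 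The claim then reduces to two tasks: (a) showing $\hat{\ourbm}$ concentrates around $\ourbm$, and (b) controlling how an additive perturbation of $\ourbm$ propagates through $\Phi$.

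For step (a), I would note that each coordinate $\btg_t(\bx)_i$ is an i.i.d.\ random variable taking values in $\{v_1,\dots,v_n\}\subseteq[0,1]$ with mean $m_i$, so $\hat{m}_i = \frac{1}{T}\sum_t \btg_t(\bx)_i$ is an empirical mean of bounded i.i.d.\ variables. A Hoeffding (or multiplicative Chernoff) bound gives $\Pr[|\hat{m}_i - m_i|\geq \tau]\leq 2\exp(-2T\tau^2)$, and a union bound over the $n$ coordinates yields $\|\hat{\ourbm} - \ourbm\|_\infty\leq\tau$ except with probability $2n\exp(-2T\tau^2)$, which is $\text{negl}(T)$. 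Because the $j$-th order statistic is $1$-Lipschitz with respect to $\ell_\infty$, this also gives $|\hat{m}_{i^*}-m_{i^*}|\leq\tau$ simultaneously for every rank $i$. This transfer to order statistics matters, since $\Phi$ depends on $\ourbm$ only through its order statistics and the ranking could otherwise flip under resampling.

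For step (b), I would establish that $\Phi$ is locally Lipschitz in the order statistics on the region of interest. Writing $g(\ourbm)$ for the argument of $-\ln(\cdot)$, the derivative of $\Phi$ with respect to each $m_{i^*}$ is $-g(\ourbm)^{-1}\,\partial g/\partial m_{i^*}$, and the partials of $g$ involve terms like $m_{i^*}^{-\alpha}$ (from differentiating $m^{1-\alpha}$) and constants (from $\sum_{i\notin\calS}m_{i^*}$). Provided the participating $m_{i^*}$ are bounded away from $0$ and $g(\ourbm)$ is bounded away from $0$, this yields $|\hat{\epsilon}_{\text{robust}}-\epsilon_{\text{robust}}|\leq \Lambda\,\|\hat{\ourbm}-\ourbm\|_\infty$ for a finite constant $\Lambda = \Lambda(\alpha,k_0,\ourbm)$. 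Taking $\tau$ on the order of $\delta_\epsilon\,\hat{\epsilon}_{\text{robust}}/\Lambda$ then forces $\epsilon_{\text{robust}}=\Phi(\ourbm)\geq \Phi(\hat{\ourbm})-\Lambda\tau\geq(1-\delta_\epsilon)\hat{\epsilon}_{\text{robust}}$ on the high-probability event, and substituting this $\tau$ back into the tail bound keeps the failure probability $\text{negl}(T)$.

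The hard part is controlling the Lipschitz constant $\Lambda$ uniformly. Since $\alpha>1$, the exponent $1-\alpha$ is negative, so $m_{i^*}^{1-\alpha}$ and its derivative $m_{i^*}^{-\alpha}$ blow up as any participating $m_{i^*}\to 0$ and degrade as $\alpha$ grows; yet Table~\ref{tab:result} takes a supremum over $\alpha>1$, so the estimate must hold uniformly in $\alpha$. I would handle this by tracking the multiplicative deviation $\hat{m}_i/m_i$ (via the Chernoff form of the bound) rather than an additive one, which keeps the relative error of each $m_{i^*}^{1-\alpha}$ factor under control independently of how small $m_{i^*}$ is, and by restricting attention to the ranks in $\calS$, whose mass is non-negligible after the $\|\ourbm\|_1=1$ normalization. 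This is precisely where the formal version in~\ref{sec:app:concen} must pin down the constants hidden in $\text{negl}(T)$ and the admissible range of $\delta_\epsilon$.
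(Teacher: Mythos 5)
Your proposal follows essentially the same route as the paper's own proof: both view $\epsilon_{\text{robust}}$ and $\hat{\epsilon}_{\text{robust}}$ as one functional evaluated at the true expectation $\bbE[\btg(\bx)]$ and at the empirical average $\hat{\ourbm}$, concentrate all coordinates simultaneously via Hoeffding plus a union bound over the $n$ pixels, and propagate the deviation through the functional by a first-order estimate before inverting $-\ln(\cdot)$. The only real difference is cosmetic: the paper exploits concavity of $\phi(\ourbm) = 2k_0\psi(\ourbm)-\sum_{i\in\calS}m_{i^*}$ to obtain a one-sided bound whose effective Lipschitz constant is the gradient evaluated at the observed $\hat{\ourbm}$ (hence data-dependent and finite without extra assumptions), whereas you posit a neighborhood-uniform constant $\Lambda$ requiring masses bounded away from zero; your added care about order-statistic stability and uniformity in $\alpha$ refines rather than changes the argument.
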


%Since we used Hoeffding bound in the proof of Theorem~\ref{theo:concentration}, we know our concentration bound can not be applied to interpretation maps without using a scoring vector to 

\section{Experimental Results}\label{sec:exp}
%\lirong{Weird section title. Sth like ``Experimental Results?''}
In this section, we experimentally evaluate the robustness and the interpretation accuracy of \ourmethod. We show that our approach performs better on both ends against the existing approach of Sparsified SmoothGrad~\cite{levine2019certifiably}.

\noindent{\bf General setups of our implementation.} We use PASCAL Visual Object Classes Challenge 2007 dataset  (VOC2007,~\cite{pascal-voc-2007}) to evaluate both interpretation robustness and interpretation accuracy. This enables us to compare the annotated object positions in VOC2007 with the interpretation maps to benchmark the accuracy of interpretation methods.  We adopt VGG-16~\cite{simonyan2014very} as the CNN backbone for all the experiments.
Simple Gradient~\cite{simonyan2013deep} is used as the base interpretation algorithm\footnote{We note that the base interpretation algorithm is not our baseline. The baseline throughout this paper is Sparsified SmoothGrad.} (denoted as $\bg(\cdot)$ in the input of Algorithm~\ref{alg:our}).

\noindent{\bf Defense and attack configurations.} We focus our study on the most challenging case of $\ell_{d}$-norm attack: $\ell_{\infty}$-norm attack.
We examine the robustness and accuracy of our methods under the standard $\ell_{\infty}$-norm attack against the top-$k$ component of Simple Gradient, which is firstly introduced by~\cite{ghorbani2019interpretation}. Formally, our attack method is presented in in algorithm~\ref{alg:attack}. Here, $B$ denotes the set of top-$k$ components' subscripts.  To be more specific, we set the size of attack $L = 8/256 \approx 0.03$, learning rate $lr = 0.5$ and the number of iteration $T = 300$.
\begin{algorithm}
\caption{$\ell_\infty$-norm Attack on Top-$k$ Overlap~\citep{ghorbani2019interpretation}
}\label{alg:attack}
\begin{algorithmic}
   \STATE {\bfseries Inputs:}  An integer $k$, learning rate $lr$, an input image $\bx \in \bbR^n$; a interpretation method $\bg(\cdot,\,\cdot)$, maximum $\ell_{\infty}$-norm perturbation $L$, and the number of iterations $T$
   \STATE{\bfseries Output:} Adversarial example $\btx$.
   \STATE Define $D(\bz) = -\sum_{i\in B} \bg(\bx)_i$.
   \STATE {\bfseries Initialization:} $\bx^0 = \bx$.
   \FOR{$t=1$ {\bfseries to} $T$}
   \STATE $\bx^t \gets \bx^{t-1} + lr\cdot\frac{\nabla D(\bx^{t-1})}{\gamma}$
   \IF{$||\bx^t-\bx||_d > \rho$}
   \STATE $\bx^t \gets \bx + \rho\cdot\frac{\bx^t-\bx}{||\bx^t-\bx||_d}$
   \ENDIF
   \ENDFOR
   \STATE{\bfseries Output:} $\btx = \argmax_{\bz\in\{\bx^0,\cdots,\bx^T\}} D(\bz)$.
\end{algorithmic}
\end{algorithm}

According to our noise setup discussed in Section~\ref{sec:detail_setting}, we set the shape factor of GND as $b = 10$ in consideration of VOC2007 dataset image size. To compare with other approaches, the standard deviation of noise is fixed to be 0.1. The scoring vector $\bv$ used to aggregate ``votes'' is designed according to a sigmoid function. We take $v_i = \frac{1}{Z}\cdot\left[1+e^{\eta\cdot(i-k^*)}\right]^{-1}$, where $Z = \sum_{i'=1}^n\left[1+e^{\eta\cdot(i'-k^*)}\right]^{-1}$ is the normalization factor. $k^*$ and $\eta$ are user-defined parameters to control the shape of $\bv$. In all discussions of Section~\ref{sec:exp}, we set $\eta = 10^{-4}$. See Figure~\ref{fig:Our_Example} in Section~\ref{sec:intro} for an illustration of our interpretation maps.

%\section{$\ell_{\infty}$ Attack on Top-$k$ Overlap}

\subsection{Robustness of \ourmethod}\label{sec:robust}
%\textcolor{red}{[Directly say To xxx,  we use sparsified SmoothGrad xxx. Compared with SmoothGrad, it has benefits/advantages in xxx.]}
We compare the \btk robustness of \ourmethod{} with Sparsified SmoothGrad. \ourmethod{} gets not only tighter robustness bound, but stronger robustness against $\ell_{\infty}$-norm attack in comparison with baseline.  For both  Sparsified SmoothGrad and our approach, we set the number of samples $T$ to be $50$.

\iffalse
\begin{figure}[ht]
\centering
\includegraphics[width = 0.48\textwidth]{figure.pdf}
\vspace{-1.1em}
\caption{\btk Robustness of our approach in comparison with Sparsified SmoothGrad. The solid lines show the top-$k$ overlapping under the $\ell_{\infty}$-norm attack introduced in Algorithm~\ref{alg:attack}. Dash lines show the theoretically guaranteed robustness bound against any $\ell_{\infty}$-norm attack}
\label{fig:robustness}
\end{figure}
\fi

\noindent{\bf Experimental Robustness.}  Figure~%\ref{fig:robustness}
\ref{fig:exp}
shows the \btk robustness of our approach in comparison with  Sparsified SmoothGrad. Here, we adopt the transfer-attack setting, where an attack to the top-$2500$ of Simple Gradient is applied to both our approach and   Sparsified SmoothGrad. We plot the top-$k$ overlapping ratio $\beta$ between the interpretation of the original image $\bx$ and the adversarial example $\btx$. Our approach has consistently better robustness ($\sim 10\%$) than Sparsified SmoothGrad under different settings on $k$. Also see Table~\ref{tab:robust_concentration} for the experimental robustness comparison when the parameters $T$ is different. %We also observed consistent improvement ($\approx 4\%$) on our approach from  Sparsified SmoothGrad.

\begin{figure}[ht]
\centering     %\vspace{-1em}
\includegraphics[width=0.7\textwidth]{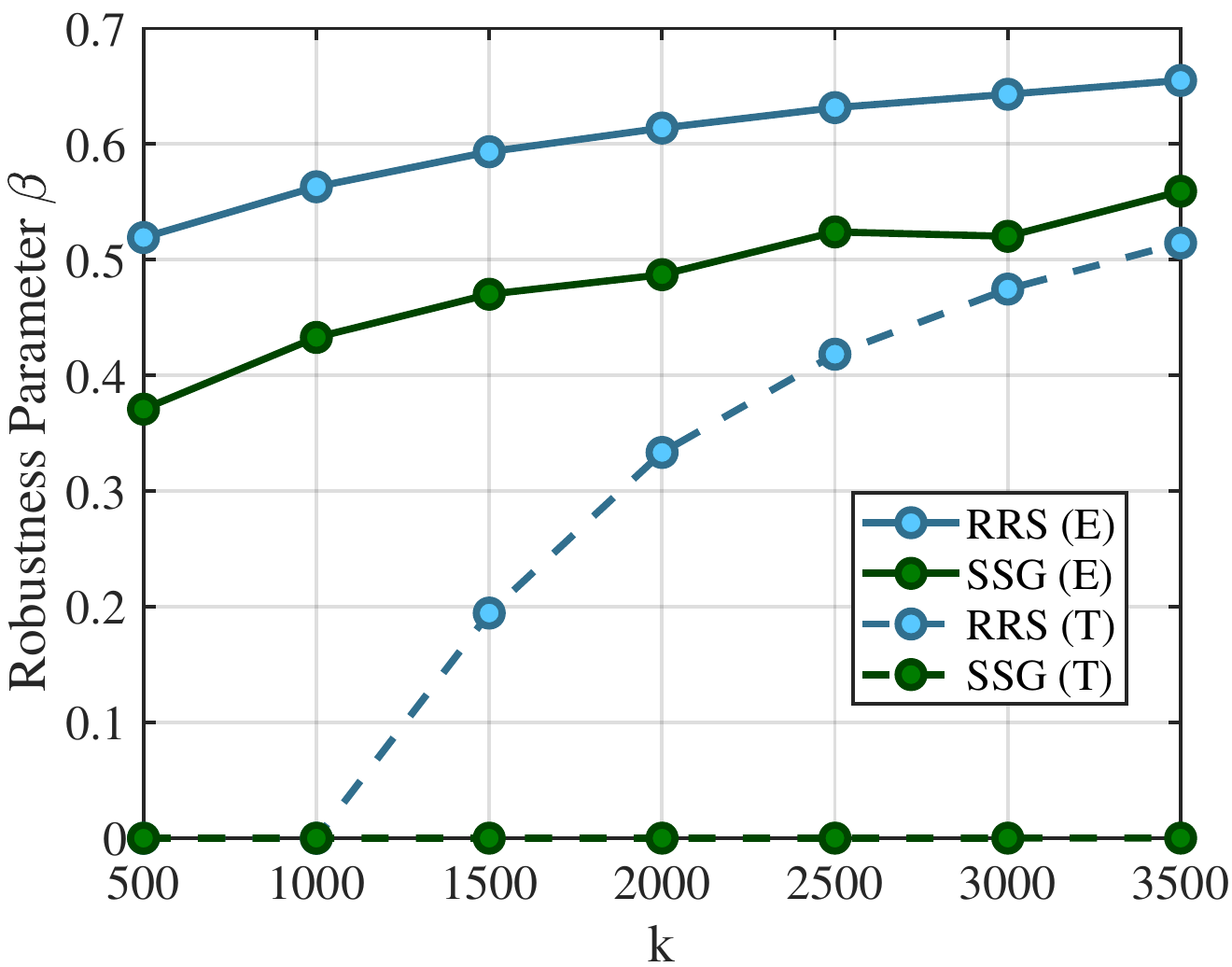}
\vspace{-0.65em}
\caption{ {%\small
The \btk robustness of \ourmethod{} (RRS) in comparison with Sparsified SmoothGrad (SSG). (E) and (T) corresponds to the experimental and theoretical robustness respectively. %To improve presentation, the data point of ($k=8\times10^3$, score$ = 0.728$) is not shown on the right plot. %in comparison with Sparsified SmoothGrad.
}}\label{fig:exp}\vspace{-0.8em}
\end{figure}

\noindent{\bf Theoretical Robustness Bound.} The dash lines in Figure~%\ref{fig:robustness}
\ref{fig:exp} presents the theoretical robustness bound under $\ell_{\infty}$-norm attack. One can see that our robustness bound is much tighter than the bound provided in~\cite{levine2019certifiably}. This because the tool used by~\cite{levine2019certifiably} is hard to be generalized to $\ell_{\infty}$-norm attack, and a direct transfer from $\ell_2$-norm to $\ell_{\infty}$-norm will result in an extra loss of $\Theta(\sqrt{n})$. %More experiments comparing the theoretical robustness and experimental robustness of our method can be found in Appendix~\ref{sec:new_exp}. In those experiments, we focus on the robustness of our approach under different noise levels and attack sizes.

\noindent We use the following experiments to further illustrate our theory under different attack sizes and different noise levels (the standard deviation of generalized normal distribution). The experiment setting is the same as Figure~\ref{fig:exp}. Our results are summarized in the two charts below, where $\beta$ is the robustness parameter in Definition~\ref{def:robust} (larger means more robust).

\begin{table}[ht]
\centering
%\resizebox{0.65\textwidth}{!}{
\begin{tabular}{cccccc}
\toprule
Noise level $\sigma$& 0.07& 0.1 &0.15& 0.2& 0.3\\\midrule
Experimental $\beta$ &0.535& 0.665& 0.720& 0.745& 0.748\\         
Theoretical $\beta$ &0.169& 0.480& 0.652& 0.707& 0.729
\\ \bottomrule
\end{tabular}%}
\vspace{-0.5em}
\caption{Experimental and theoretical top-$k$ robustness under different noise level} \label{tab:robust1}
\end{table}

\begin{table}[ht]
\centering
%\resizebox{0.55\textwidth}{!}{
\begin{tabular}{cccc}
\toprule
Attack size $L$& 0.02& 0.03 &0.04\\\midrule
Experimental $\beta$ &0.674& 0.656& 0.628\\         
Theoretical $\beta$ &0.672& 0.508& 0.271
\\ \bottomrule
\end{tabular}%}
\vspace{-0.5em}
\caption{Experimental and theoretical top-$k$ robustness under different size of attack $L$} \label{tab:robust2}
\end{table}

\subsection{Accuracy of Interpretations}
Accuracy is another main concern of interpretation maps. That is, to what extent the main attributes of interpretation overlap with the annotated objects. In this section, we introduce a generalization of {\em pointing game}~\cite{zhang2018top,fong2019understanding} to evaluate the performance of our interpretation method. In a pointing game, an interpretation method calculates the interpretation map and compare it with the annotated object. If the top pixel in the interpretation map is within the object, a score ``$+1$'' will be to the object. Otherwise, a score ``$-1$'' will be granted. The pointing game score is the average score of all objects.

\begin{figure}[ht]
\centering
\includegraphics[width = 0.65\textwidth]{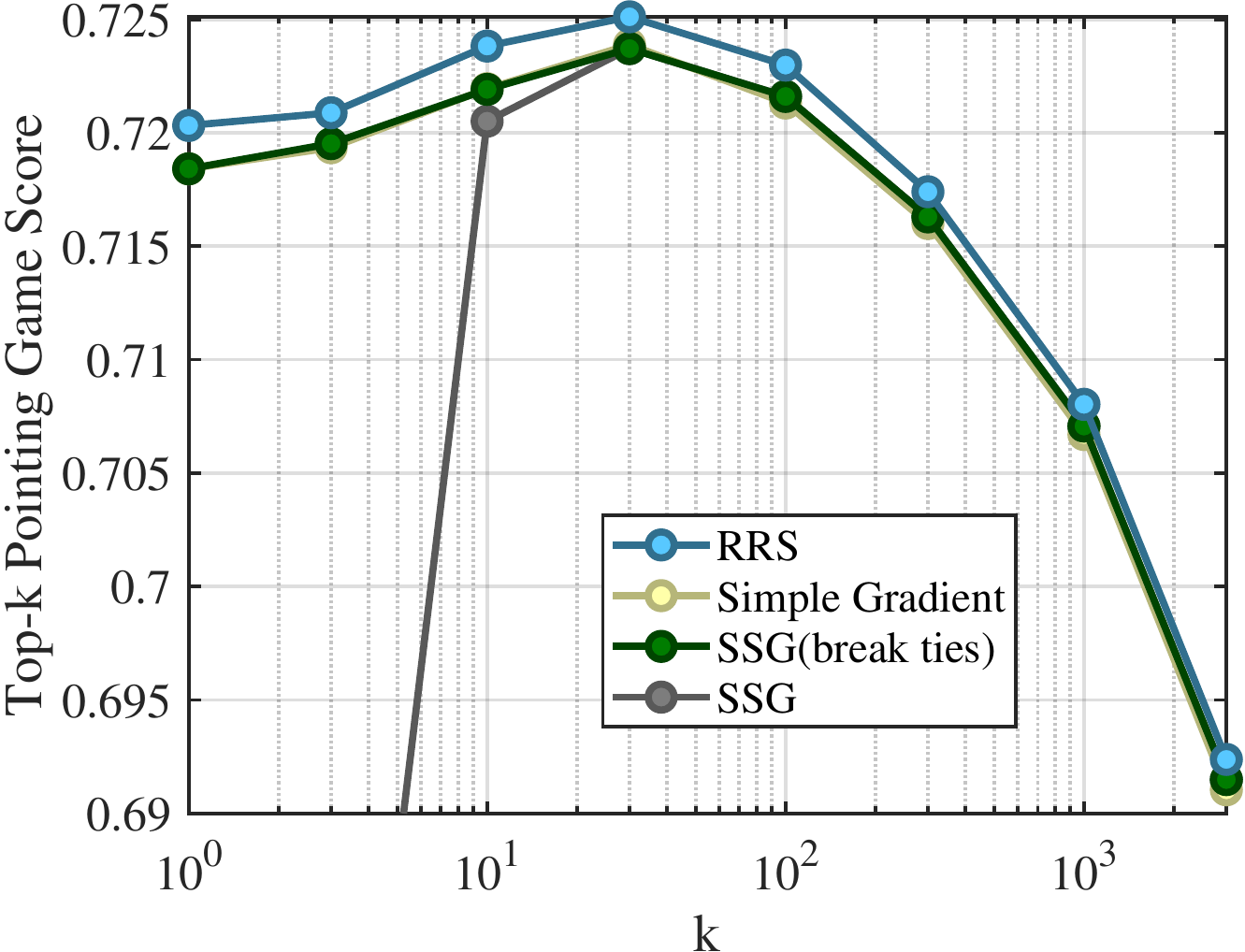}
\vspace{-0.8em}
\caption{{%\small 
The accuracy of \ourmethod{} (RRS) in comparision with Sparsified SmoothGrad (SSG) and Simple Gradient. The accuracy of SSG for $k<10$ is too small and was ignored to improve the presentation. %To improve presentation, the data point of ($k=8\times10^3$, score$ = 0.728$) is not shown on the right plot. %in comparison with Sparsified SmoothGrad.
}}
\label{fig:accuracy}\vspace{-0.8em}
\end{figure}

However, not only the top-$1$ pixel effect the quality of interpretation maps. Hence, we generalize the idea of pointing game to the top-$k$ pixels by checking the ratio of top-$k$ pixels of interpretation map within the region of the object and applying the same procedure as the standard pointing game.
Figure~%\ref{fig:PG}
\ref{fig:accuracy} shows the result of top-$k$ pointing game on various interpretation maps. The annotated object position in VOC2007 dataset ({\em e.g.}, the green boxes in Figure~\ref{fig:Our_Example}) is taken as the ground-truth. The parameters of  Sparsified SmoothGrad and our approach are both the same as the settings in Section~\ref{sec:robust}. In figure~\ref{fig:accuracy}, we compare \ourmethod{} with two versions of Sparsified SmoothedGad. The first version (SSG) follows the same setting as~\cite{levine2019certifiably}, which randomly break the ties when calculate the its top-$k$ component. For the second version (SSG (break ties)), we broke the ties according to descending order for the summation of all noisy interpretations. This order used for tie-breaking is the same as the order for standard SmoothedGrad. To our surprise, our approach's accuracy also turns out to be better than  Sparsified SmoothGrad in both settings, even though robustness is usually considered an opposite property of accuracy. %See Section~\ref{sec:new_exp} for detailed settings of baselines.

\subsection{Computational Efficiency-Robustness Tradeoff}\label{sec:real_concentration}
%\textcolor{red}{[Consider a milder tone since you have cited \citep{levine2019certifiably}'s work many times. And they could be our reviewer.]}
Computational efficiency is another main concern of interpretation algorithms. However,
most of previous works on interpretation of certified robustness did not pay much attention on the  computational efficiency.
% most previous works studying the robustness didn't take computational efficiency into account.
If applying the settings of those works to larger images ({\em i.e.,} ImageNet or VOC figures) or complex CNNs ({\em i.e.,} VGG or GoogleLeNet), it may take hours to generate a single interpretation map. Thus, we experimentally verify our approach's performance when the number of generated noisy interpretations $T$ is no larger than $30$. Here,  $T$ can be used as a measure of computational resources because the time to compute on a noisy interpretation map is similar in our approach and  Sparsified SmoothGrad. Also note that the time taken by other steps are negligible in comparison with computing the noisy interpretations. Table~\ref{tab:robust_concentration} shows the top-$k$ robustness of our approach and   Sparsified SmoothGrad  when $T$ is small. We observe that the robustness of both algorithms will decrease when $T$ becomes smaller. However, our approach's robustness decreases much slower than  Sparsified SmoothGrad when the computational resources become more limited. When $T=5$, our \ourmethod{} becomes {\em more than twice} more robust than  Sparsified SmoothGrad.

\begin{table}[ht]%\vspace{-0.8em}
\centering
\resizebox{0.95\textwidth}{!}{
\begin{tabular}{ccccccc}
\toprule
\#Sample $T$ &   5 &    10 & 15 & 20 & 25 & 30\\\midrule
$\beta$ for RRS &  {\bf 50.84\%} &  {\bf 58.02\%}  & {\bf 60.80\%} &     {\bf 63.60\%} & {\bf 64.48\%} & {\bf 65.67\%}\\         
$\beta$ for SSG &    22.10\% & 41.90\%    & 47.67\% &    53.93\% & 54.99\% & 56.99\%
\\ \bottomrule
\end{tabular}}
\vspace{-0.8em}
\caption{\btk robustness when the computational resources is highly constrained.} \label{tab:robust_concentration}\vspace{-1.3em}
\end{table}

\section{Conclusions}
In this paper, we firstly build a bridge to connect the property of RDP with the robustness of interpretation maps. Based on that, we propose a simple yet effective method to generate interpretation maps with certifiable robustness against broad kinds of attacks. Our approach can prevent the model interpreter and the model classifier from being confused by interpretation attacks. Our theoretical guarantee on robustness can provide tighter bounds than existing works against any $\ell_{d}$-norm attacks for all $d\in[1,\infty]$. 
Finally, we experimentally show that our approach can provide both better robustness and better accuracy than the recently proposed approach of Sparsified SmoothGrad.

\section{Acknowledgement}
Ao Liu acknowledges the IBM AIHN scholarship for support. Lirong Xia acknowledges NSF \#1453542, NSF \#1716333, and ONR \#N00014-17-1-2621 for support. 

\iffalse
\section*{Ethical Impact}
CNNs have been applied to many large-scale industrial systems. However, in some applications, neither the system-builders nor the system-users know whether the CNNs are trust-worthy or not. Interpretation methods answer this question by providing feature importance maps. Recent researches report that the interpretation methods may not be trustworthy either. Thus, the vulnerability of interpretation may confuse both the model classifiers and model interpreters. In this paper, we provide a robust and trustworthy interpretation method to CNNs, which helps both the system-builders and the system-users understand the predictions of CNNs correctly. 

%Our method will not leverage bias to the dataset. %On the contrary, our method can be applied to detect and remove adversarial examples (biased data) in datasets.
According to our knowledge, our approach can be used to remove the following two kinds of adversarial examples: $(i)$ Adversarial examples target to fool the prediction of CNNs\footnote{Unlike typical images, adversarial example's interpretation map usually do not match the position of objects}\;\; $(ii)$ Adversarial examples target to fool the interpretation maps. Thus, the data-collectors may also benefit from our research, which can avoid possible ethical problems caused by harmful adversarial examples from the collected data. 
\fi

{%\fontsize{8}{9}\selectfont
%\bibliographystyle{named}

%\bibliography{ref}
}

\clearpage
\onecolumn
{\noindent\bf This is the supplementary material of  \\
Certifiably Robust Interpretation via \Renyi Differential Privacy}

\appendix

\section{Missing Definitions}\label{sec:overlap_def}
%\subsection{Definition of Top-$k$ Overlap}
In this section, we give a formal definition to the top-$k$ overlapping ratio. Before proceeding, we first define the set of top-$k$ component $T_k(\cdot)$ of a vector. Formally, 
$$T_k(\bx) \triangleq \left\{x:\;x\in\bx\;\wedge\;\#\{x':\;x'\in\bx\;\wedge\;x'\geq x\}\leq k\right\}$$
Now, we are ready to formally define top-$k$ overlapping ratio:
\begin{definition}[Top-$k$ Overlap]
Using the notations above, the top-$k$ overlap ratio $V_k$ between any pair of vectors $\bx$ and $\btx$ is defined as:
$$V_k(\bx,\btx)\triangleq \frac{1}{T}\left(\#\big[T_k(\bx)\cap T_k(\btx)\big]\right).$$
\end{definition}

%\section{Missing Proofs}\label{sec:proof}

%\subsection{Proof of Theorem~\ref{theo:renyi_renyi}}\label{sec:proof_renyi}

\section{Proof of Theorem~\ref{theo:renyi_btk}}\label{app:renyi_btk}
\subsection{A readable proof for Theorem~\ref{theo:renyi_btk}}
\begin{proof}[Proof of Theorem~\ref{theo:renyi_btk}]
Mathematically, we calculate the minimum change in \Renyi divergence to violet the requirement of \btk robustness. That is, calculate
$$\min_{\bbE[\btm]\;s.t.\;V_k(\bbE[\ourbm],\bbE[\btm]) \leq \beta}\vD_{\alpha}(\bbE[\ourbm]\,|| \;\bbE[\btm]).$$
In all remaining proof of Theorem~\ref{theo:renyi_btk} we slightly abuse the notation and let $m_i$ and $\tm_i$ to represent $\bbE[m_i]$ and $\bbE[\tm_i]$ respectively. W.L.O.G., we assume $m_1\geq\cdots\geq m_n$. 
Then, we show that we must have $\tm_1\geq\cdots\geq \tm_{k-k_0-1}\geq \tm_{k-k_0} = \cdots = \tm_{k+k_0+1} \geq \tm_{k+k_0+2} \geq \cdots \geq \tm_n$ to reach the minimum of \Renyi divergence. 
To simplify notation, we let $s(\bbE[\ourbm]\,|| \;\bbE[\btm]) = \sum_{i=1}^n m_i\left(\frac{\tm_i}{m_i}\right)^{\alpha}$. One can see that $s(\bbE[\ourbm]\,|| \;\bbE[\btm])$ reaches the minimum on the same condition as $\vD_{\alpha}(\bbE[\ourbm]\,|| \;\bbE[\btm])$. To outline our proof, we prove the following claims one by one (See section~\ref{sec:missing} for the detailed proofs).

$(i)$ [Natural] To reach the minimum, there are exactly $k_0$ different components in the top-$k$ of $\bbE[\ourbm]$ and $\bbE[\btm]$.

$(ii)$ To reach the minimum, $\tm_{k-k_0},\cdots,\tm_k$ are not in the top-$k$ of $\bbE[\btm]$.

$(ii')$ To reach the minimum, $\tm_{k+1},\cdots,\tm_{k+k_0+1}$ must appear in the top-$k$ of $\bbE[\btm]$.

$(iii')$ [Proved in~\cite{li2019certified}] To reach the minimum, we must have $\tm_i\geq \tm_j$ for any $i\leq j$.

One can see that the above claims on $\bbE[\btm]$ is equivalent to the following KKT condition:
\begin{equation}\nonumber
\begin{split}
&\min_{\tm_1\cdots,\tm_{n}} \sum_{i=1}^n m_i\left(\frac{\tm_i}{m_i}\right)^{\alpha}\\
\text{subject to }&\;\;\sum_{i=1}^n \tm_i = 1\\
\text{subject to }&\;\;\tm_j - \tm_i \leq 0,\;\forall i<k\\
\text{subject to }&\;\;-\tm_i \leq 0,\;\forall i\in[k]\\
\text{subject to }&\;\;\tm_j-\tm_i = 0,\;\forall i,j\in\calS\\
\end{split}
\end{equation}
Solving it, we know $s(\bbE[\ourbm]\,|| \;\bbE[\btm])$ reaches minimum when
\begin{equation}\nonumber
\begin{split}
\forall i\in\calS,\,\tm_i = \frac{\breve{m}}{2k_0\breve{m}+\sum_{i\not\in\calS}m_i}\;\;\;\;\text{and}\;\;\;\;
\forall i\not\in\calS,\,\tm_i = \frac{m_i}{2k_0\breve{m}+\sum_{i\not\in\calS}m_i},\\
\end{split}
\end{equation}
where $\breve{m} = \left(\frac{1}{2k_0}\sum_{i\in\calS}(m_i)^{1-\alpha}\right)^{\frac{1}{1-\alpha}}$. By plugging in the above condition, we have,
\begin{equation}\nonumber
\begin{split}
&\min_{\bbE[\btm]\;s.t.\;V_k(\bbE[\ourbm],\bbE[\btm]) \leq \beta}\vD_{\alpha}(\bbE[\ourbm]\,|| \;\bbE[\btm])
=-\ln\left( 2k_0\left(\frac{1}{2k_0}\sum_{i\in\calS}(m_i)^{1-\alpha}\right)^{\frac{1}{1-\alpha}}+\sum_{i\not\in\calS}m_{i}\right).
\end{split}
\end{equation}
Then, we know \btk robustness condition will be filled if the \Renyi divergence do not exceed the above value. 
\end{proof}

\subsection{Proofs to the claims used in the proof of Theorem~\ref{theo:renyi_btk}}\label{sec:missing}
$(i)$ [Natural] To reach the minimum, there is exactly $k_0$ different components in the top-$k$ of $\bbE[\ourbm]$ and $\bbE[\btm]$.\\
\begin{proof}
Assume that ${i_1},\cdots,{i_{k_0+j}}$ are the components not in the top-$k$ of $\ourbm$ but in the top-$k$ of $\btm$. Similarly, we let ${i_1'},\cdots,{i_{k_0+j}'}$ to denote the components in the top-$k$ of $\ourbm$ but not in the top-$k$ of $\btm$. Consider we have another $\btm^{(2)}$ with the same value with $\btm$ while $\tm_{i_{k_0+j}}$ is replaced by $\tm_{i_{k_0+j}'}$. In other words, there is $k_0+j$ displacements in the top-$k$ of $\btm$ while there is $k_0+j-1$ displacements in the top-$k$ of $\btm^{(2)}$.
Thus,
\begin{equation}\nonumber
\begin{split}
&s\left(\bbE[\ourbm]\,|| \;\bbE[\btm^{(2)}]\right)-s\left(\bbE[\ourbm]\,|| \;\bbE[\btm]\right)\\
=\;&\left( \frac{\left(\tm_{i_{k_0+j}'}\right)^{\alpha}}{\left(m_{i_{k_0+j}}\right)^{\alpha-1}} + \frac{\left(\tm_{i_{k_0+j}}\right)^{\alpha}}{\left(m_{i_{k_0+j}'}\right)^{\alpha-1}}\right)-\left( \frac{\left(\tm_{i_{k_0+j}}\right)^{\alpha}}{\left(m_{i_{k_0+j}'}\right)^{\alpha-1}} + \frac{\left(\tm_{i_{k_0+j}'}\right)^{\alpha}}{\left(m_{i_{k_0+j}}\right)^{\alpha-1}}\right).
\end{split}
\end{equation}
Because $m_{i_{k_0+j}} \geq m_{i_{k_0+j}'}$ and $\tm_{i_{k_0+j}} \leq \tm_{i_{k_0+j}'}$, we know $s\left(\bbE[\ourbm]\,|| \;\bbE[\btm^{(2)}]\right)-s\left(\bbE[\ourbm]\,|| \;\bbE[\btm]\right)\leq 0.$
Thus, we know reducing the number of misplacement in top-$k$ can reduce the value of $s\left(\bbE[\ourbm]\,|| \;\bbE[\btm]\right)$. If requires at least $k_0$ displacements, the minimum of $s\left(\bbE[\ourbm]\,|| \;\bbE[\btm]\right)$ must be reached when there is exactly $k_0$ displacements.
\end{proof}

$(ii)$ To reach the minimum, $\tm_{k-k_0},\cdots,\tm_k$ are not in the top-$k$ of $\bbE[\btm]$.\\
\begin{proof}
Assume that ${i_1},\cdots,{i_{k_0}}$ are the components not in the top-$k$ of $\ourbm$ but in the top-$k$ of $\btm$. Similarly, we let ${i_1'},\cdots,{i_{k_0}'}$ to denote the components in the top-$k$ of $\ourbm$ but not in the top-$k$ of $\btm$. Consider we have another $\btm^{(2)}$ with the same value with $\btm$ while $\tm_{i_{j}}$ is replaced by $\tm_{j'}$, where $\tm_{j'}$ is in the top-$k$ of $\btm$ and $m_{j'} \geq m_{i_{j}}$. In other words, $\tm_{j'}^{(2)}$ is no longer in the top-$k$ components of $\btm$ while $\tm_{i_j}^{(2)}$ goes back to the top-$k$ of $\tm_{i_j}^{(2)}$. Note again that $m_{j'} \geq m_{i_{j}}$ and $j' \leq i_{j}$. Thus,
\begin{equation}\nonumber
\begin{split}
&s\left(\bbE[\ourbm]\,|| \;\bbE[\btm^{(2)}]\right)-s\left(\bbE[\ourbm]\,|| \;\bbE[\btm]\right)\\
=\;&\left( \frac{\left(\tm_{i_{j}}\right)^{\alpha}}{\left(m_{j'}\right)^{\alpha-1}} + \frac{\left(\tm_{j'}\right)^{\alpha}}{\left(m_{i_{j}}\right)^{\alpha-1}}\right)-\left( \frac{\left(\tm_{{j'}}\right)^{\alpha}}{\left(m_{j'}\right)^{\alpha-1}} + \frac{\left(\tm_{i_{j}}\right)^{\alpha}}{\left(m_{i_{j}}\right)^{\alpha-1}}\right).
\end{split}
\end{equation}
Note that $\tm_{i_j} \leq \tm_{i}$, we have $s\left(\bbE[\ourbm]\,|| \;\bbE[\btm^{(2)}]\right)-s\left(\bbE[\ourbm]\,|| \;\bbE[\btm]\right)\geq 0$. Thus, we know that ``moving a larger component out from top-$k$ while moving a smaller component back to top-$k$'' will make $s(\cdot ||\cdot)$ larger. Then, $(ii)$ follows by induction.
\end{proof}

$(ii')$ To reach the minimum, $\tm_{k+1},\cdots,\tm_{k+k_0+1}$ must appear in the top-$k$ of $\bbE[\btm]$, which holds according to the same reasoning as $(ii)$.

%\subsection{Proof of Theorem~\ref{theo:privacy_bound}}

\subsection{Formal Version of Theorem~\ref{theo:concentration} with Proof}\label{sec:app:concen}
Before presenting Theorem~\ref{theo:concentration}, we first provide a technical lemma for the concentration bound for $\phi(\ourbm) =  2k_0\left(\frac{1}{2k_0}\sum_{i\in\calS}(m_{i^*})^{1-\alpha}\right)^{\frac{1}{1-\alpha}}-\sum_{i\in\calS}m_{i^*}$. Considering that $||\ourbm||_1 = 1$, we know $\epsilon_{\text{robust}} = -\ln\left(1+\phi(\ourbm)\right)$. To simplify notation, we let $\psi(\ourbm) = \left(\frac{1}{2k_0}\sum_{i\in\calS}(m_{i^*})^{1-\alpha}\right)^{\frac{1}{1-\alpha}}$. Thus, we have $\phi(\ourbm) =  2k_0\psi(\ourbm)-\sum_{i\in\calS}m_{i^*}$
\begin{lemma}\label{lem:tech}
Using the notations above, we have:
$$\Pr\left[\phi(\ourbm)\leq\phi(\hat{\ourbm})+\delta\right]\leq 1-n\cdot\exp\left(-2T\delta^2\left[\psi^{\alpha}(\hat{\ourbm})\left(\sum_{i\in\calS}\hat{m}_{i^*}^{-\alpha}\right)-2k_0\right]^2\right).$$
\end{lemma}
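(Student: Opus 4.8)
The plan is to reduce the concentration of $\phi$ to ordinary coordinate-wise concentration of the empirical means and then propagate it through $\phi$ using the gradient I can compute in closed form. Write $\hat m_i = \frac{1}{T}\sum_{t=1}^T \btg_t(\bx)_i$ for the empirical average estimating the true mean $m_i = \bbE[\btg(\bx)_i]$. Since every vote $\btg_t(\bx)_i$ equals some entry $v_j$ of the scoring vector and $0\le v_j\le 1$ (using $\|\bv\|_1=1$ and $v_i\le 1$ as in the proof of Theorem~\ref{theo:renyi_renyi}), each summand lies in $[0,1]$, so Hoeffding's inequality gives $\Pr[|\hat m_i - m_i|\ge s]\le 2\exp(-2Ts^2)$ for every coordinate. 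A union bound over the $n$ coordinates yields $\Pr[\|\hat{\ourbm}-\ourbm\|_\infty \ge s]\le 2n\exp(-2Ts^2)$, which is the source of the factor $n$ in the statement.

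First I would control the order statistics. Because $\phi$ is evaluated on the sorted components $m_{i^*}$, and the empirical ordering need not coincide with the true one, I would invoke the fact that coordinate-wise sorting is non-expansive in $\ell_\infty$: $\max_i|\hat m_{i^*}-m_{i^*}|\le \|\hat{\ourbm}-\ourbm\|_\infty$. Hence the $\ell_\infty$ control above transfers verbatim to the sorted vector, and I never need to reason about which original pixel occupies a given rank.

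Next I would linearize $\phi$ on $\calS$. Differentiating $\psi(\ourbm)=\big(\frac{1}{2k_0}\sum_{i\in\calS}(m_{i^*})^{1-\alpha}\big)^{1/(1-\alpha)}$ gives $\partial\psi/\partial m_{j^*}=\frac{1}{2k_0}\psi^{\alpha}(m_{j^*})^{-\alpha}$ for $j\in\calS$, so that $\partial\phi/\partial m_{j^*}=\psi^{\alpha}(m_{j^*})^{-\alpha}-1$ on $\calS$ and $0$ off it. Summing over the $2k_0$ indices of $\calS$ produces exactly the bracketed quantity $\psi^{\alpha}(\hat{\ourbm})\big(\sum_{i\in\calS}\hat m_{i^*}^{-\alpha}\big)-2k_0$, i.e. the directional derivative of $\phi$ along the all-ones direction on $\calS$. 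A mean-value/first-order estimate then shows that a fluctuation of size $s$ in the $\calS$-coordinates moves $\phi$ by an amount governed by this factor, so requiring the coordinate tolerance $s$ to be controlled by $\delta$ and this sensitivity, and substituting into the Hoeffding bound, yields an exponent of the stated form; the inequality as written is then its complementary restatement.

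The main obstacle I anticipate is not the Hoeffding/union-bound skeleton but the clean propagation step: justifying the one-sided mean-value estimate with the sum-of-partials as the effective sensitivity requires checking the sign and monotonicity of $\partial\phi/\partial m_{i^*}$ over the relevant range so that the partials do not cancel, and confirming that evaluating the sensitivity at the empirical point $\hat{\ourbm}$ rather than at an intermediate point only perturbs the bound by lower-order terms. Tracking the precise placement of this factor in the exponent and the direction of the final inequality is the delicate bookkeeping, whereas the entire probabilistic content is carried by the bounded-range Hoeffding estimate together with the non-expansiveness of sorting.
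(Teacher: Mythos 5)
Your skeleton matches the paper's proof: per-coordinate Hoeffding (each vote is an entry of $\bv$, hence bounded) plus a union bound over the $n$ pixels, followed by a first-order propagation of the coordinate-wise fluctuation through $\phi$, whose partial derivatives on $\calS$ sum to exactly the bracketed sensitivity $\psi^{\alpha}(\hat{\ourbm})\bigl(\sum_{i\in\calS}\hat m_{i^*}^{-\alpha}\bigr)-2k_0$. The difference is that the step you explicitly defer as the ``main obstacle'' --- justifying the one-sided estimate with the sensitivity evaluated at the empirical point $\hat{\ourbm}$, rather than at some intermediate point --- is precisely the one substantive idea in the paper's proof, and you do not supply it. The paper closes it with concavity: $\psi$ is a power mean of order $1-\alpha<0$ of the $m_{i^*}$, hence concave on the positive orthant, and $\phi = 2k_0\psi-\sum_{i\in\calS}m_{i^*}$ is concave minus linear, hence concave. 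Concavity gives the supergradient inequality
\begin{equation}\nonumber
\phi(\ourbm)-\phi(\hat{\ourbm})\;\leq\;\sum_{i\in\calS}\bigl(m_{i^*}-\hat m_{i^*}\bigr)\cdot\left.\frac{\partial\phi(\ourbm)}{\partial m_{i^*}}\right|_{\ourbm=\hat{\ourbm}},
\end{equation}
which holds globally with the gradient taken exactly at $\hat{\ourbm}$: no mean-value intermediate point, no lower-order error terms to absorb, and no monotonicity check on the partials. Your proposed substitute (mean value theorem plus sign/monotonicity analysis) is genuinely harder to push through, because the gradient of $\phi$ blows up as components approach $0$ and cannot be uniformly compared across the segment joining $\ourbm$ and $\hat{\ourbm}$; without the concavity observation, the propagation step remains unproven.

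Two smaller remarks. First, your point that sorting is non-expansive in $\ell_\infty$, so the fluctuation bound transfers to the order statistics $m_{i^*}$, is a legitimate detail that the paper silently skips; it strengthens the argument rather than changing it. Second, your worry about ``the precise placement of this factor in the exponent and the direction of the final inequality'' is well founded: as printed, the lemma's event has threshold $\delta$ while the exponent carries $\delta^2[\,\cdot\,]^2$, which is only dimensionally consistent if the Hoeffding tolerance is taken as $\delta/[\,\cdot\,]$ (equivalently, if the event threshold were $\delta\cdot[\,\cdot\,]$), and the stated $\leq$ should be $\geq$; the paper's own combination step inherits the same mismatch, so this bookkeeping must be repaired no matter which route one takes.
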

\begin{proof} 
We first prove the following statement: if for all $i\in[n]$, $|\hat m_i - m_i|\leq \delta$, we always have:
\begin{equation}\label{equ:first}
\phi(\ourbm)-\phi(\hat{\ourbm})\leq \delta\left[\psi^{\alpha}(\hat{\ourbm})\left(\sum_{i\in\calS}\hat{m}_{i^*}^{-\alpha}\right)-2k_0\right].
\end{equation}
Because $\phi(\cdot)$ is a concave function when all components of the input vector are positive. Thus, if $\phi(\ourbm)\geq\phi(\hat{\ourbm})$ we have,
$$\phi(\ourbm)-\phi(\hat{\ourbm}) \leq \sum_{i\in\calS}(m_{i^*}-\hat m_{i^*})\cdot\left.\frac{\partial\phi(\ourbm)}{\partial m_{i^*}}\right|_{\ourbm = \hat{\ourbm}}.$$
Then, inequality (\ref{equ:first}) follows by the fact that $\frac{\partial\phi(\ourbm)}{\partial m_{i^*}} = m_{i^*}^{-\alpha}\cdot\psi^{\alpha}(\ourbm)$.

Then, according to Hoeffding bound, we have:
$$\Pr\left[\forall i\in[n],\;|\hat m_i - m_i|\leq \delta\right] \geq 1-n\cdot\exp\left(-2T\delta^2\right).$$
Because $\left[\forall i\in[n],\;|\hat m_i - m_i|\leq \delta\right]$ is a special case of $\left[\phi(\ourbm)\leq\phi(\hat{\ourbm})+\delta\right]$, Lemma~\ref{lem:tech} follows.
\end{proof}
Then, we formally present Theorem~\ref{theo:concentration}:

\noindent{\bf Theorem~\ref{theo:concentration} (formal)} {\em Using the notations above, we have,
\begin{equation}\nonumber
\begin{split}
    &\Pr\left[\epsilon_{\text{robust}}\geq\hat\epsilon_{\text{robust}}-\delta\right]\\
    \leq\;& 1-n\cdot\exp\left(-2T(e^{-\delta}-1)^2(\phi(\hat\ourbm)-1)^2\left[\psi^{\alpha}(\hat{\ourbm})\left(\sum_{i\in\calS}\hat{m}_{i^*}^{-\alpha}\right)-2k_0\right]^2\right).
\end{split}
\end{equation}} 
\begin{proof}
Theorem~\ref{theo:concentration} follows by applying $\epsilon_{\text{robust}} = -\ln\left(1+\phi(\ourbm)\right)$ to Lemma~\ref{lem:tech}.
\end{proof}
%\section{Detail Algorithm}

%\section{Detailed algorithm of the attack method used in our experiments}\label{sec:attack}

%\section{Additional Experimental Results for Interpretation Robustness}\label{sec:new_exp}

%\clearpage
%\onecolumn

\end{document}